\pdfoutput=1
\documentclass{article}

\usepackage{microtype}
\usepackage{graphicx}
\usepackage{subcaption}
\usepackage{booktabs}
\usepackage{multirow}
\usepackage{hyperref}
\usepackage{tikz}
\usetikzlibrary{positioning,arrows.meta,shapes.geometric}
\usepackage{algorithm}
\usepackage{algorithmic}

\usepackage[accepted]{icml2026}
\makeatletter
\renewcommand{\Notice@String}{Accepted to the TAIGR Workshop at ICML~2026.
Copyright 2026 by the author.}
\makeatother
\hypersetup{pdfsubject={Accepted to the TAIGR Workshop at ICML 2026}}

\usepackage{amsmath}
\usepackage{amssymb}
\usepackage{mathtools}
\usepackage{amsthm}
\usepackage[capitalize,noabbrev]{cleveref}

\theoremstyle{plain}
\newtheorem{theorem}{Theorem}[section]

\newtheorem{lemma}[theorem]{Lemma}
\theoremstyle{definition}

\theoremstyle{remark}

\newcommand{\todo}[1]{}  

\icmltitlerunning{Evidence-Grounded Verified Agentic Reasoning}

\begin{document}

\twocolumn[
\icmltitle{Evidence-Grounded Verified Agentic Reasoning:\\
A Path Toward Eliminating LLM Hallucination in Empirical Inference via Tool-Attested Kernel Proofs}

\icmlsetsymbol{equal}{*}

\begin{icmlauthorlist}
\icmlauthor{Junyu Ren}{uchicago}
\end{icmlauthorlist}

\icmlaffiliation{uchicago}{Committee on Computational and Applied Mathematics, University of Chicago, Chicago, IL, USA}

\icmlcorrespondingauthor{Junyu Ren}{junyuren@uchicago.edu}

\icmlkeywords{verifiable AI, hallucination, formal verification, AI governance, Lean 4, source-faithfulness}

\vskip 0.3in
]

\printAffiliationsAndNotice{}

\begin{abstract}
Tool access alone does not make LLM empirical reasoning
governable: accepted outputs need not descend from attested
evidence, and accepted deductions need not hold up under formal
scrutiny. We present EG-VAR (Evidence-Grounded Verified Agentic
Reasoning), a Lean~4-based tool-calling architecture in which the
Lean kernel is the sole minter of \textsc{Verified} claims via
tool-attestation axioms and declared source lifts. Every verified
output structurally descends from an attested tool call
(\Cref{thm:novh}) and a kernel-checked chain of valid inference
(\Cref{thm:noded}); residual outputs are honest \textsc{Abstain}
with a replayable audit trail. On a subcollection of TableBench
numerical reasoning ($n{=}120$), EG-VAR attains 120/120 versus a
95\% same-tool baseline; on counterfactual stress tests
(5~domains $\times$ 2~models), EG-VAR stays 100\%
source-faithful while same-tool drops to 80--90\% (no-tool
50--80\%). With the LLM as deployment-time formalizer, residual
semantic-formalization error is 3.3\% on Sonnet and 1.7\% on Opus.
We position EG-VAR as a technical-governance interface for
high-stakes empirical claims: a formal sidecar makes the target
proposition, source scope, evidence boundary, proof obligation,
and abstention condition auditable, eliminating unsupported
\textsc{Verified} outputs today while turning formalization errors,
lift and source-authority disputes, ambiguities, and abstentions
into explicit audit targets. Over time, typed sidecars in datasets, APIs, public
records, and AI-generated documents can amortize this
formalization burden into reusable infrastructure.
\end{abstract}

\section{Introduction}
\label{sec:intro}

LLMs hallucinate on empirical facts when making inference,
fabricating or reproducing outdated ones~\citep{huang2025hallucination}. Tools
and external memories were introduced as a mitigation, letting the
LLM consult an authoritative source rather than rely on its
parametric prior. But tool access alone does not yield verifiable
empirical claims: frontier models exhibit strong confirmation
bias, dismissing retrieved evidence that conflicts with parametric
memory~\citep{xie2024adaptive}, and even state-of-the-art models
struggle to stay faithful to counterfactual
context~\citep{faitheval2025}; our own counterfactual stress tests
on tables reproduce the pattern at $\sim$12--15\%, robust to an
explicit ``according to the table'' source-anchoring cue
(\Cref{sec:eval:tier15}).

Tool access leaves two structural failures uncovered: accepted
outputs need not descend from attested evidence, and accepted
deductions need not hold up under formal scrutiny. EG-VAR targets
each with a structural contract: every \textsc{Verified} output
must descend from an attested tool call (\Cref{thm:novh}), and
every accepted proof step must type-check in the Lean kernel under
the declared axiom set (\Cref{thm:noded}). \textsc{Verified} means
attested evidence plus kernel-checked derivation; everything else
surfaces as honest \textsc{Abstain} with a replayable audit trail.
We call this combination a \emph{verified empirical claim}: a
claim any auditor can later check without re-running the model.

Concretely, given a claim like ``Brazil scored on average $x$
more goals in the 2018 opener than Italy'' against a spreadsheet,
EG-VAR compiles the claim with indeterminate answer into a typed
goal in Lean~4. An LLM uses tool calls to fetch attested values
and compute statistics from the table, reasons and infers an
answer value derived from the attested evidence, and proposes a
Lean tactic encoding this derivation as a proof; the Lean kernel
type-checks the tactic. Accepted:
publish the answer with a machine-checkable proof; rejected:
abstain. A third party re-checks offline by re-running Lean; the
LLM need not be re-run.

\paragraph{Our contribution.} We present \textbf{EG-VAR}
(Evidence-Grounded Verified Agentic Reasoning), a source-general
architecture for empirical claims grounded in attested external
data, with tables as the first end-to-end instantiation. To our
knowledge this is the first proof-assistant-verified architecture
in Lean~4 for this setting. The LLM is one
untrusted layer in a four-layer stack: a deterministic
\emph{tool layer} (L1, attested storage queries), a per-source
\emph{formalization layer} (L2, audited semantic lifts from
storage to world facts), a \emph{Lean~4 kernel} that mints
\emph{Verified} world claims solely via attestation-derived rules, and
the \emph{solver LLM} that chooses the answer witness, substitutes
it into the claim's dependent-sum goal, and proposes a proof the
kernel checks. The key
architectural commitment is that \texttt{mkVerified} (the sole
constructor for \texttt{Evidence Verified~$w$}) requires an
\texttt{Attested\_T} from the runtime, so every verified output
structurally descends from a tool call. The certified answer is
extracted as the witness of a dependent sum (\citealp{martin1984type})
in the Curry--Howard tradition~\citep{howard1980formulae}: a question
``which entity has property~$P$?''~becomes a goal of type
$\Sigma\,(e:\mathrm{String})\,(v:\mathrm{Value}),
\mathrm{Evidence~Verified}\,(\ldots\,e\,v)$, and the surface answer is
the first projection of the kernel-checked witness. Verified outputs
come with replayable proof traces; unverified claims abstain rather
than guess. Unlike symbolic factual verifiers, execution-attestation
systems, and math theorem provers (\Cref{sec:related}), EG-VAR
combines proof-assistant checking with attested empirical grounding.

\paragraph{Why a proof assistant for empirical grounding?} The
standard objection is that proof assistants are overkill for
empirical reasoning over external sources. We disagree: existing
LLM-driven theorem-proving systems (AlphaProof~\citep{alphaproof2024},
Aristotle~\citep{aristotle2025},
DeepSeek-Prover~\citep{deepseek_prover_v15_2024},
Hilbert~\citep{hilbert_apple_2025}) already show that the substrate
is not the bottleneck; the bottleneck is audited per-source semantic
lifting, which our formalization protocol addresses
(\Cref{sec:approach:pertable}; instantiated here on tables). This
cost is amortized: each per-source lift is audited once and reused
across all subsequent claims grounded in that source, so
formalization never enters the per-query inference loop.

\paragraph{Scope of the empirical work.} All empirical results in
this paper use the tabular instantiation and evaluate on
TableBench~\citep{tablebench}. This is the first end-to-end demonstration of the
L1/L2/lift pattern on a single substrate, not the conceptual
boundary of the framework; cross-substrate validation (SQL views,
typed APIs, knowledge graphs) is future work.

\paragraph{Empirical findings.} On TableBench unambiguous claims
(Tier~1 gold-goal benchmark, $n{=}120$), EG-VAR attains 120/120
(100\%); the same-tool
same-model baseline plateaus at 114/120 (95.0\%) and the table-only
single-shot baseline at 107/120 (89.2\%). On a 5-domain counterfactual
stress panel (Tier~1.5 binary-compare and direction claims, $n{=}20$
per panel, two models, two injection regimes), EG-VAR remains 100\% source-faithful in
every cell; on Sonnet the same-tool baseline drops to 80--90\% and
the no-tool baseline to 50--80\%, depending on injection magnitude. Under full end-to-end
deployment with the LLM itself as the formalizer (Tier~2, $n{=}120$),
the pure formalizer-error rate is 3.3\% on Sonnet and 1.7\%
on Opus, with the remaining disagreements split into logged
ambiguities (4.2\%) and surfaced benchmark-artifact issues (1.7\%)
that are auditable rather than silent (\Cref{sec:eval:tier2}). The
discriminator is robust to model strength, source cue, and flip
magnitude.

\paragraph{A path toward eliminating LLM hallucination.} The
title claim has a precise structural meaning
(\Cref{thm:novh}): no \textsc{Verified} output can appear without
an attested tool leaf, under any formalizer. Semantic
mis-formalization remains non-zero at current LLM capability
(3.3\% on Sonnet, 1.7\% on Opus), but this accuracy axis is
addressable by fine-tuning on the curator-grade (claim, gold goal)
pairs the paper releases (code supplement in
\Cref{app:repro}; curator pipeline summary in
\Cref{sec:approach:provenance} with full proposer/critic/kernel
enumerate in \Cref{app:provenance}; case studies in
\Cref{app:formalizer-case-studies}) without weakening the
structural guarantee.

\paragraph{Formalization flywheel.} The long-run bet is not that
users will manually write Lean. It is that empirical communication
can gradually acquire \emph{formal sidecars}. In the near term,
EG-VAR's curator corpus supplies supervised and verifier-guided
training data for better NL-to-formal translation. In the medium
term, many sources already move along a substrate spectrum from
CSV to SQL views, OpenAPI schemas, knowledge graphs, and typed
registries, reducing the per-source lift burden
(\Cref{app:scaling}). In the long term, LLM-generated answers and
documents can themselves carry formal copies of key claims
alongside prose. Data media already carry informal instructions of use (schemas,
READMEs); EG-VAR just requires them formal. When upstream
providers themselves run EG-VAR-like pipelines, formalization
becomes mechanized rather than handcrafted, and data arrives
already bearing kernel-checked \textsc{Verified} instructions.
The formalization burden moves out of the per-query inference loop
into reusable source-side infrastructure; the kernel boundary
continues to enforce that only attested claims are labeled
\textsc{Verified}.

\paragraph{Governance translation.} The two safety theorems
(\Cref{sec:approach:safety}), combined with the no-upcast grade
discipline, yield three structural governance properties: (i) tool-
attestation-as-axiom enables independent audit of every verified
output; (ii) grade discipline makes uncertainty typed and queryable
rather than a hidden confidence number; (iii) honest abstention (a
kernel rejection becomes the user-visible outcome) is a measurable
property, not a learned behaviour. We unpack each in
\Cref{sec:discuss:gov}.

\section{Related Work}
\label{sec:related}

We compare EG-VAR against four buckets of prior work
(\Cref{tab:related}): empirical studies of how LLMs trade off
parametric priors against external context; symbolic factual
verifiers; LLM-driven theorem provers; and execution-attestation
systems. Each contributes part of the trust pipeline, but none
combines a proof-assistant kernel, tool-attestation as a typed axiom,
and honest abstention as a structural property for empirical claims.

\begin{table*}[!tbp]
\centering\scriptsize
\setlength{\tabcolsep}{3pt}
\begin{tabular}{@{}lllll@{}}
\toprule
System & Substrate & Evidence input & Checker & Residual risk \\
\midrule
TabVer~\citep{tabver2024} & Nat.\ logic + arith. & Table cells & LLM + NatLog & Mapping error \\
ProoFVer~\citep{proofver2022} & Natural logic & Premises & Seq2seq + NatLog & Mapping error \\
FoVer~\citep{fover2025} & FOL + Z3 & Indirect (CoT) & SMT & Consistency $\not\Rightarrow$ grounded \\
AWS Bedrock~\citep{aws_automated_reasoning} & Undisclosed & Policy text & Proprietary & Undisclosed \\
\midrule
AlphaProof~\citep{alphaproof2024} & Lean~4 (math) & None & Lean kernel & --- \\
DeepSeek-Prover~\citep{deepseek_prover_v15_2024} & Lean~4 (math) & None & Lean kernel & --- \\
\midrule
Attestable Audits~\citep{schnabl2025attestable} & TEE crypto & Exec.\ bytes & TEE attestation & Wrong-but-attested \\
Proofs of Autonomy~\citep{grigor2025proofs} & MPC-TLS (Web Proofs) & Run provenance & Crypto proof & Wrong-but-attested \\
\midrule
\textbf{EG-VAR (ours)} & \textbf{Lean~4 + tool attest.} & \textbf{Attested L1$\to$L2 lifts} & \textbf{Lean kernel} & \textbf{Honest abstention} \\
\bottomrule
\end{tabular}
\caption{Prior work by substrate, evidence input (what the checker
consumes to verify empirical claims), checker, and residual risk
under EG-VAR's threat model.
EG-VAR is the only row combining a proof-assistant kernel with
tool-attested grounding and honest abstention on rejection.}
\label{tab:related}
\end{table*}

The closest empirical prior is ClashEval~\citep{clasheval2024}
(prior-vs-context conflict on frontier models across six domains;
our counterfactual stress tests in \Cref{sec:eval:tier15}
reproduce its prior-retention ranking). Adjacent substrates:
symbolic factual verification~\citep{tabver2024,proofver2022,fover2025}
with learned proof-construction components and mapping-error failure modes;
post-hoc fact-checking~\citep{factool2023,rarr2023,safe2024,cove2023};
commercial policy guardrails~\citep{aws_automated_reasoning};
LLM-for-math theorem proving~\citep{alphaproof2024,deepseek_prover_v15_2024,aristotle2025,hilbert_apple_2025}
(same trust model, self-contained claims, no external grounding);
autoformalization~\citep{dtv2024}; long-lineage CS
verification~\citep{raggi2022representation,bundy1991proof,clarke2003cegar,lahiri2013differential};
and hardware-attested AI
verification~\citep{schnabl2025attestable,vet_your_agent_2025,grigor2025proofs}
(attests execution provenance, not claim truth; complementary to EG-VAR).
Detailed treatment of each thread, including the axes EG-VAR
extends over ClashEval and why self-contained-math provers do not
cover empirical claims, is in \Cref{app:related-extended}.

\section{EG-VAR}
\label{sec:approach}

\subsection{Four-layer stack and trust ledger}
\label{sec:approach:arch}

EG-VAR partitions the agentic-AI loop into four layers with a
strict trust ledger (\Cref{fig:fourlayer}; Threat-model summary
box below).

\textbf{L1, Tool layer (trusted, deterministic).} The tool layer
exposes a fixed set of queries against a source: cell lookups,
filtered aggregates, and selection operators on tables, but the
pattern is source-agnostic. Each query produces an attested payload:
the deterministic answer the tool computed, packaged with a runtime
\texttt{Attested\_T} witness that records query, source identity, and
result. The runtime is the only entity that can produce
\texttt{Attested\_T} witnesses.

\textbf{L2, Per-source formalization (audited offline).} Tools attest
\emph{storage facts} (e.g., ``cell at row $r$, column $c$, has
value~$v$''); claims live at the world-ontology level
(\texttt{HasProperty}, \texttt{ArgmaxWhere}, \texttt{SumWhere}, ...).
The mapping from L1 storage facts to L2 world facts is a curator's
interpretive commitment (in this paper, an audited
proposer--critic--kernel pipeline; \Cref{app:provenance}), encoded
as a per-source list of typed \emph{lifts} that the kernel imports
as axioms (\Cref{sec:approach:pertable}). The L1/L2 split keeps the formal
language describing the world rather than the storage vehicle, the
same separation \citet{codd1970relational} drew between a
relational schema and the predicates it serves.

\textbf{L3, Lean~4 kernel (trusted, formal).} Lean type-checks every
proof step. The kernel is the only entity authorized to mint
\texttt{Evidence Verified~$w$} for any world-claim $w$. There is
exactly one mint rule, \texttt{mkVerified}, which requires an
\texttt{Attested\_T} hypothesis (\Cref{sec:approach:mkverified}).

\textbf{L4, Solver LLM (untrusted, stochastic).} The LLM receives a
typed goal and proposes proof tactics. The kernel rejects invalid
tactics and the rejection messages drive revision. The solver cannot
synthesize \texttt{Attested\_T} terms; it can only consume those
introduced by tool calls during the loop.

\paragraph{Trust ledger.} The kernel, the L1 vocabulary, the per-source
lifts, and the tool adapter are auditable artifacts. The solver is
type-checked, not trusted: an arbitrarily wrong solver can never produce
a verified-but-unsupported output. This is a \emph{structural}
guarantee, not a statistical one. Each verified output is accompanied
by the generated Lean proof file, a self-contained replay artifact
that any third party with the kernel and the per-source lifts can
re-typecheck.

\begin{figure}[tbp]
\centering
\fbox{\begin{minipage}{0.92\columnwidth}
\small
\textbf{Threat model (summary).} Safety theorems
(\Cref{thm:novh,thm:noded}) assume:
\textbf{Trusted:} Lean~4 kernel, EG-VAR prelude, audited
per-source lifts $\Lambda(s)$, deterministic Python runtime.
\textbf{Untrusted:} LLM analyst/formalizer/solver; they may
hallucinate but cannot introduce axioms.
\textbf{Out of scope:} (i)~semantic faithfulness of $\tau$ to
the NL claim; (ii)~a semantically wrong audited lift;
(iii)~source-data corruption that passes the runtime identity-cell
validator. Items (i)--(ii) detailed in \Cref{sec:discuss:limits}.
\end{minipage}}
\end{figure}

\begin{figure}[t]
\centering
\begin{tikzpicture}[
  node distance=3mm,
  font=\footnotesize,
  layer/.style={draw, rounded corners=2pt, minimum width=0.78\columnwidth,
    minimum height=8mm, align=center, inner sep=3pt},
  trusted/.style={layer, fill=gray!12},
  untrusted/.style={layer, fill=white},
  flow/.style={-{Latex[length=2mm]}, thick},
]
\node[untrusted] (l4) {\textbf{L4 Solver LLM}~~(untrusted)};
\node[trusted, below=of l4] (l3) {\textbf{L3 Lean kernel}~~(trusted): \\
  mints \texttt{Verified} via \texttt{mkVerified}};
\node[trusted, below=of l3] (l2) {\textbf{L2 Per-source formalization}~~(audited)};
\node[trusted, below=of l2] (l1) {\textbf{L1 Tool layer}~~(trusted, deterministic)};
\draw[flow] (l4) -- (l3) node[midway, right, font=\scriptsize]{tactics};
\draw[flow] (l3) -- (l2) node[midway, right, font=\scriptsize]{lifts};
\draw[flow] (l2) -- (l1) node[midway, right, font=\scriptsize]{queries};
\draw[flow, dashed] (l1.east) -- ++(3mm,0) |-
  ([xshift=3mm]l3.east)
  node[midway, right, font=\scriptsize, align=left]
  {\texttt{Attested\_T}\\payloads};
\end{tikzpicture}
\caption{EG-VAR four-layer architecture. Solid arrows: control
flow. Dashed arrow: \texttt{Attested\_T} payloads, the sole
load-bearing hypothesis of \texttt{mkVerified}
(\Cref{sec:approach:mkverified}).}
\label{fig:fourlayer}
\end{figure}

\subsection{Tables as formal objects}
\label{sec:approach:tablesasformal}

The deeper move in EG-VAR is on the source side. Empirical
verification fails not because claims are unformalized, but because
the \emph{source's semantics} are implicit in the tool adapter,
hidden in fuzzy retrieval, or assumed in prompt scaffolding. The
source is a co-equal trust artifact: from the kernel's perspective a
table is an object that needs formalization on the same footing as
a claim.

Concretely, three things are formalized per source:
(i) an \textbf{L1 storage vocabulary} (\texttt{CellAt},
\texttt{ColumnContains}, \texttt{RowMatches}, ...) over which tools
emit attested payloads;
(ii) an \textbf{L2 world ontology} (\texttt{HasProperty},
\texttt{ArgmaxWhere}, \texttt{SumWhere}, ...) over which claims are
stated;
(iii) a \textbf{per-source lift catalog} (\Cref{sec:approach:pertable})
encoding the curator's interpretive commitment about which L1 facts
attest which L2 claims under this table's chosen semantics. The
trust artifact a third party must inspect is the lift catalog; tools
carry L1 facts deterministically, the kernel reasons only over L2
evidence. This separates source-side curation from claim-side
reasoning, making both auditable independently.

A dataset admits multiple consistent formalizations (e.g., a
matchup table as rank-2 curried vs rank-1 with composite-domain);
these are type-isomorphic in any Cartesian closed setting and the
kernel admits any consistent one. The curator picks an
interpretation and audits the lift menu against the table;
pluralism is architectural. Full gauge-freedom argument in
\Cref{app:rank}.

We trade internal completeness for soundness, decidability, and
proof economy: L2 predicate equivalences (e.g.\
$\textsf{MeanWhere} \Leftrightarrow
\textsf{SumWhere}/\textsf{CountWhere}$) are localized in per-source
lifts rather than added as kernel lemmas, keeping proof search
decidable and traces inspectable. This mirrors SMT theory
combination, description logics, and decidable FOL fragments;
detailed rationale in \Cref{app:anchors}.

\subsection{Evidence grades and downcast-only}
\label{sec:approach:grades}

Evidence carries an explicit grade
$g \in \{\textsc{Verified}, \textsc{Supported}, \textsc{Plausible},
\textsc{Speculative}\}$ ordered by reliability. The kernel admits
\emph{downcasting} only ($g_1 \succeq g_2$ and $e :
\texttt{Evidence}~g_1~w \Rightarrow \texttt{downcast}~e :
\texttt{Evidence}~g_2~w$). No upcast rule exists, so conclusions
cannot outrank their weakest premise. The only constructor for
\texttt{Evidence Verified~$\_$} is \texttt{mkVerified}, described
next; lower grades arise from explicit downgrades (e.g., staleness,
source-conflict) or LLM-proposed heuristic claims that the calculus
admits at \textsc{Plausible} or \textsc{Speculative} only.

\subsection{The \texttt{mkVerified} rule (sole minter of
\textsc{Verified})}
\label{sec:approach:mkverified}

Informally: verified evidence exists only if the runtime already
supplied an attested payload. The kernel's mint rule is:
\vspace{-4pt}
\begin{small}
\begin{verbatim}
mkVerified :
  forall (T : ToolId) (q : Query_T)
         (p : Payload_T q)
         (att : Attested_T q p) (w : WProp),
  w in claims (interp_T q p) ->
  Evidence Verified w
\end{verbatim}
\end{small}
\vspace{-8pt}

\noindent The hypothesis \texttt{att : Attested\_T q p} is the
load-bearing element. \texttt{Attested\_T} witnesses are introduced
into the proof context only by the runtime in response to a tool call;
the solver cannot synthesize them. \texttt{interp\_T q p} is the
per-tool interpretation function (a small total Lean function) that
maps the attested payload to a finite set of L2 world claims; the
kernel mints \texttt{Verified} for any $w$ in that set. There is no
other path to \texttt{Evidence Verified~$\_$}.

\subsection{Per-source formalization as the curator's interpretive
commitment}
\label{sec:approach:pertable}

The interesting design problem is the L1$\to$L2 bridge. A tool attests
``cell at row~$r$, column~``state'', has value~``alabama'' '' (an L1
storage fact); the claim under verification is ``Alabama has the
highest HIV incidence rate'' (an L2 world fact). The bridge is a
per-source list of typed lifts, audited offline by the source's
curator, that the kernel imports as axioms.

Per-source lifts operationalize the L1$\to$L2 bridge: typed
composition of an attested L1 fact with a audited lift
axiom mints an \texttt{Evidence Verified} witness for the matching
L2 world-property. Absence of a matching lift is a feature: the
solver has no L2 proof path, the kernel rejects, and the system
abstains rather than silently guess. The L1/L2 separation is
source-agnostic (SQL views, knowledge graphs, and sensor streams
admit the same pattern; \Cref{sec:discuss:gov}); a complete worked
example (HIV argmax via state projection, showing the lift,
attested payload, and Sigma-witness extraction) is in
\Cref{app:hiv-argmax}.

\subsection{Operational proof object and safety theorems}
\label{sec:approach:safety}

The safety guarantees are stated about the runtime's operational
\emph{proof object}, not about Lean's type theory alone. A proof
object is the tuple of artifacts that the Python implementation at
\texttt{src/python/runtime/pipeline.py} actually returns from
\texttt{run\_pipeline} as the fields of \texttt{PipelineResult}:
\[
  \mathrm{ProofObject}(s, c)
  \;=\; \langle\, \textit{leanFile},\; \textit{evidence},\;
                 \textit{steps} \,\rangle
\]
where \textit{leanFile} is the generated Lean module text,
\textit{evidence} is the time-ordered list of tool-call payloads
lifted to observation-leaf axioms, and \textit{steps} is the history
of solver actions and kernel-verbatim feedback.
\Cref{fig:trace} gives a concrete end-to-end instance for one
claim. A standalone
replay-audit script
(\texttt{src/python/runtime/verify\_axioms.py}, \Cref{app:repro})
takes \textit{leanFile} as input and recomputes the whitelist check
post-hoc; the script is not invoked in the \texttt{run\_pipeline}
loop. The primary safety gate at generation time is the Lean kernel
invocation \texttt{lake env lean} inside \texttt{pipeline.py}.

\paragraph{Trust zones.} Four roles participate in producing the
proof object. \textbf{Untrusted proposers}: the LLM analyst,
formalizer, and solver; none can introduce new top-level axioms
because their output is constrained to dictionary entries, a proposed
goal type, and a tactic body inserted inside a pre-written Sigma type
declaration. \textbf{Audited runtime-orchestrator}: the deterministic
Python layer that executes tools, emits observation-leaf axioms,
assembles the Lean module, invokes the kernel, and records
\textit{evidence} and \textit{steps}. \textbf{Trusted checker}: the
Lean~4 kernel. \textbf{Trusted domain assumptions}: the fixed EG-VAR
prelude (\texttt{EGVar/Basic.lean}, \texttt{EGVar/TableL1.lean},
\texttt{EGVar/WorldOntology.lean}, and \texttt{EGVar/Tactics.lean})
and the audited per-source lifts $\Lambda(s)$ from
\texttt{data/tier1\_table\_formalizations.json}, committed at
curation time per \Cref{sec:approach:provenance}.

\begin{algorithm}[t]
\caption{\textsc{RuntimeVerify}: operational definition of the
runtime loop. Trust-zone labels annotate each step. The Python
implementation at \texttt{src/python/runtime/pipeline.py} realizes
this algorithm.}
\label{alg:runtime}
\begin{algorithmic}[1]
\STATE {\bfseries Input:} claim $c$, source $s$
\STATE {\bfseries Output:} proof object and verdict
\STATE $\textit{guide} \gets \textsc{Analyst}(s, c)$
  \hfill\textit{// untrusted LLM}
\STATE $\textit{goalType} \gets
  \textsc{Formalizer}(c, \textit{guide}, \textit{dict}, \Lambda(s))$
\STATE $\textit{evidence} \gets [\,]$;\quad
  $\textit{steps} \gets [\,]$
\FOR{$i = 1$ {\bfseries to} $\textit{maxSteps}$}
  \STATE $\textit{action} \gets
    \textsc{Solver}(c, \textit{goalType}, \textit{evidence})$
  \IF{$\textit{action}$ is a tool call $(t, q)$}
    \STATE $\textit{payload} \gets \textsc{Execute}(t, q)$
      \hfill\textit{// audited tool}
    \IF{$\textit{payload}$ succeeded}
      \STATE append $(t, q, \textit{payload},
        \Lambda(s).\textit{lift})$ to $\textit{evidence}$
    \ENDIF
  \ELSIF{$\textit{action}$ is a proof attempt $\textit{tac}$}
    \STATE $\textit{leanFile} \gets
      \textsc{Build}(\textit{goalType}, \Lambda(s),
        \textit{evidence}, \textit{tac})$
    \STATE $(\textit{ok}, \textit{out}) \gets
      \textsc{KernelCheck}(\textit{leanFile})$
      \hfill\textit{// trusted Lean kernel}
    \IF{$\textit{ok}$}
      \STATE {\bfseries return} $(\textit{leanFile}, \textit{evidence},
        \textit{steps})$, \textsc{Verified}
    \ENDIF
  \ENDIF
  \STATE append $(\textit{action}, \textit{feedback})$ to
    $\textit{steps}$
\ENDFOR
\STATE {\bfseries return} $\bot$, \textsc{Abstain}
\end{algorithmic}
\end{algorithm}

\begin{figure}[t]
\centering
\begin{tikzpicture}[
  node distance=2mm,
  font=\scriptsize,
  box/.style={draw, rounded corners=1pt, minimum width=0.88\columnwidth,
    minimum height=5mm, align=center, inner sep=2pt},
  untrusted/.style={box, fill=white},
  trusted/.style={box, fill=gray!12},
  flow/.style={-{Latex[length=1.5mm]}, thick},
]
\node[untrusted] (nl) {\textbf{NL claim:} \emph{``Which state has the highest HIV rate?''}};
\node[untrusted, below=of nl] (goal) {\textbf{Lean goal} (LLM formalizer): $\Sigma\,e,v,\;
  \mathsf{ArgmaxWhere}\,\text{``rate''}\,\mathsf{all}\,e\,v$};
\node[trusted, below=of goal] (tool) {\textbf{Tool call:} \texttt{argmax\_with\_id("rate")} --- attested payload};
\node[trusted, below=of tool] (L1) {\textbf{L1 fact:}
  $\mathsf{MaxCell}\,\text{``rate''}\,r\,v\;\wedge\;\mathsf{CellAt}\,r\,\text{``state''}\,\text{``alabama''}$};
\node[trusted, below=of L1] (L2) {\textbf{L2 lift} (audited per-source): composes $\to\;\mathsf{ArgmaxWhere}\,\text{``rate''}\,\mathsf{all}\,\text{``alabama''}\,v$};
\node[trusted, below=of L2] (kernel) {\textbf{Lean kernel:} typechecks \texttt{mkVerified} derivation};
\node[untrusted, below=of kernel] (out) {\textbf{\textsc{Verified}} answer ``alabama''
  \quad{\footnotesize{\small /}} \quad \textbf{\textsc{Abstain}} if kernel rejects};
\draw[flow] (nl) -- (goal);
\draw[flow] (goal) -- (tool);
\draw[flow] (tool) -- (L1);
\draw[flow] (L1) -- (L2);
\draw[flow] (L2) -- (kernel);
\draw[flow] (kernel) -- (out);
\end{tikzpicture}
\caption{Worked trace for one claim. Untrusted LLM output (white)
commits to a goal type; trusted runtime (gray) produces the
attested payload, composes it through the audited per-source lift,
and the Lean kernel mints \textsc{Verified} iff the derivation
type-checks. The Sigma-witness of the accepted proof is the
published answer.}
\label{fig:trace}
\end{figure}

\paragraph{Whitelist-compliant module (summary).} We call
\textit{leanFile} \emph{whitelist-compliant for source} $s$ iff
every axiom in the dependency closure of the claim proof belongs
to one of four buckets: the fixed EG-VAR prelude (\textbf{B1}),
standard Lean~4 core axioms (\textbf{B2}), runtime-emitted
observation leaves (\textbf{B3}), or per-source lifts matching
the regex \texttt{lift\_[0-9a-f]+\_.+} drawn from $\Lambda(s)$
(\textbf{B4}); the closure contains no \texttt{sorry}
variant and at least one trust-artifact witness. Full bucket
schemas, the replay-audit mechanics, and guarantee-boundary
caveats are in \Cref{app:whitelist}.

\begin{theorem}[No unsupported \textsc{Verified} outputs]
\label{thm:novh}
If \textsc{RuntimeVerify}($s, c$) returns $(\textit{leanFile},
\textit{evidence}, \textit{steps})$ with verdict \textsc{Verified},
and \textit{leanFile} is whitelist-compliant for $s$, then every
runtime-emitted observation-leaf axiom in the dependency closure of
the claim proof is backed by a prior tool execution recorded in
\textit{evidence}, and every remaining axiom dependency lies in the
fixed EG-VAR prelude, in the standard Lean~4 core axioms, or in
$\Lambda(s)$.
\end{theorem}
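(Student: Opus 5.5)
The argument is a case analysis over the four whitelist buckets, combined with an invariant of \textsc{RuntimeVerify} (\Cref{alg:runtime}). Whitelist compliance says that every axiom in the dependency closure of the claim proof lies in B1 (the EG-VAR prelude), B2 (Lean~4 core axioms), B3 (runtime-emitted observation leaves), or B4 (names matching \texttt{lift\_[0-9a-f]+\_.+} drawn from $\Lambda(s)$). The closure also contains no \texttt{sorry} variant. The second conjunct of the theorem is then immediate: every dependency outside B3 lies in B1, B2, or B4, which are exactly the prelude, the core axioms, and $\Lambda(s)$.

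Most of the work is the first conjunct: every B3 leaf in the closure is backed by a prior tool execution recorded in \textit{evidence}. I would argue this from how \textit{leanFile} is produced. A \textsc{Verified} verdict is returned only at the line where $\textit{ok}$ holds. At that point \textit{leanFile} is exactly $\textsc{Build}(\textit{goalType},\Lambda(s),\textit{evidence},\textit{tac})$ for the current \textit{evidence}, and that same list is the one returned.

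I would then prove a loop invariant: every entry of \textit{evidence} was appended immediately after a call $\textsc{Execute}(t,q)$ whose payload succeeded. This holds because the append in \Cref{alg:runtime} is the only mutation of \textit{evidence}, and it is guarded by the success test. Next I would show that \textsc{Build} emits observation-leaf axiom declarations only by iterating over \textit{evidence}, one leaf per entry, carrying that entry's $(t,q,\textit{payload})$. Together these give an injective map from the observation-leaf names in \textit{leanFile} back to \textit{evidence} entries, and hence to prior tool executions.

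The main obstacle is excluding a B3-shaped axiom introduced by an untrusted proposer rather than by \textsc{Build}. That would be a leaf declared inside the formalizer's \textit{goalType}, or inside the solver's \textit{tac}, with no tool call behind it. Whitelist membership by name pattern alone does not rule this out. I would handle it through the trust-zone constraint already stated:
\begin{itemize}
\item the solver's tactic body is spliced inside a pre-written Sigma-type declaration, where Lean syntax forbids top-level \texttt{axiom} commands;
\item the formalizer's output is a goal type together with dictionary entries, which \textsc{Build} inserts as terms, not as declarations.
\end{itemize}
So every top-level axiom in \textit{leanFile} originates from the prelude imports, from $\Lambda(s)$, or from the evidence-driven leaf emission in \textsc{Build}. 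B3 membership therefore coincides with having been emitted from \textit{evidence}, and the backing claim follows.

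I expect this step to rest on properties of the implementation: the escaping and insertion behaviour of \textsc{Build}, and the exclusion of \texttt{sorry} and of user-level axioms. I would state these as explicit lemmas about \texttt{pipeline.py} and not derive them from Lean's type theory. This is consistent with the theorem being phrased about the operational proof object.
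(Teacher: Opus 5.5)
Your route is essentially the paper's. The \textsc{Verified} verdict pins \textit{leanFile} to the kernel-accepted \textsc{Build} output over the returned \textit{evidence}. The second conjunct is read straight off buckets B1, B2 and B4. The B3 leaves are discharged by the emission discipline (\Cref{lem:emission}), which is your loop invariant plus ``\textsc{Build} emits leaves only from \textit{evidence}.'' Your explicit handling of proposer-injected axioms is more careful than the paper's, which folds it into the trust-zone paragraph and the assertion that \texttt{\_build\_lean\_file} is the only code path writing module text. However, your first bullet does not follow from Lean syntax: under verbatim splicing, a column-0 line ends the \texttt{by} block and opens a new command. So it really is the implementation lemma about \textsc{Build} that you defer to at the end.

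The step that fails as written is ``B3 membership therefore coincides with having been emitted from \textit{evidence}.'' In the full whitelist (\Cref{app:whitelist}), B3 also contains \texttt{tableTool} and \texttt{statsTool}. \texttt{\_build\_lean\_file} declares these once in every module's preamble regardless of \textit{evidence}, so no \textsc{Execute} call backs them. The paper's proof handles them by an explicit carve-out: they are payload-free \texttt{ToolId} identifiers, admitted without an evidence entry. You need the same exception.

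Two smaller fixes are needed in the same step:
\begin{itemize}
\item The coincidence also silently requires that no prelude or $\Lambda(s)$ declaration has a B3-shaped name. \Cref{lem:reserved-names} provides this via the mandatory \texttt{lift\_[0-9a-f]+\_.+} prefix.
\item Drop ``one leaf per entry'' and ``injective.'' Each evidence entry emits several axioms (e.g.\ \texttt{obsN\_row}, \texttt{obsN\_max}, \texttt{obsN\_cell}), so the leaf-to-entry map is many-to-one. This is harmless, since only the map's existence is needed.
\end{itemize}
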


\begin{theorem}[No deductive errors]
\label{thm:noded}
Every inference step that the system accepts as part of a
\textsc{Verified} output type-checks in Lean~4's kernel under the
declared axiom set.
\end{theorem}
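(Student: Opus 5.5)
The plan is to prove \Cref{thm:noded} by reducing ``accepted as part of a \textsc{Verified} output'' to the single return point of \Cref{alg:runtime}, and then reading off what that return point certifies. First, the case analysis on the control flow. \textsc{RuntimeVerify} returns verdict \textsc{Verified} only on the line guarded by $\textit{ok}$. Here $(\textit{ok},\textit{out})\gets\textsc{KernelCheck}(\textit{leanFile})$, and \textsc{KernelCheck} is the invocation \texttt{lake env lean} on the assembled module. Every other exit path is the final \textsc{Abstain}. So a \textsc{Verified} output is exactly a triple whose \textit{leanFile} the Lean kernel has accepted. No inference step enters a \textsc{Verified} output through any other channel: entries of \textit{steps} that were rejected are feedback only, and tool calls contribute axioms, not inference steps.

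Second, I will identify the ``inference steps'' of a \textsc{Verified} output with the declarations of the accepted \textit{leanFile}. By the trust-zone paragraph, \textsc{Build} assembles the module from three parts: the fixed prelude, the lifts $\Lambda(s)$ and observation leaves (all introduced as \texttt{axiom} declarations), and the solver's tactic body inside the pre-written Sigma declaration. The solver's tactics elaborate to a proof term. Lean's kernel then re-checks that term, and every definition it depends on, against its type in the environment made of these declarations. Each inference step, i.e.\ each application, abstraction, or use of \texttt{mkVerified}, a lift, or \texttt{downcast} in the elaborated term, is therefore a typing judgement the kernel has checked. The ``declared axiom set'' is precisely the set of \texttt{axiom} declarations in the module. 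When \textit{leanFile} is whitelist-compliant, \Cref{thm:novh} says which buckets B1--B4 these come from. Acceptance of the module is, by the definition of Lean's kernel, the statement that all these judgements type-check.

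The main obstacle is the gap between the operational claim (``the kernel returned success'') and the logical claim (``every step type-checks''). Three things could open that gap: \texttt{sorry}, which the kernel accepts while emitting only a warning; unsafe or \texttt{implemented\_by} declarations; and elaborator-level tricks that bypass kernel checking. I would close it in two moves. First, use the whitelist condition, which states that the dependency closure contains no \texttt{sorry} variant and only B1--B4 axioms; this is the same replay audit used for \Cref{thm:novh}. Second, use the trust-zone constraint that the solver can only insert a tactic body, not top-level commands, so it cannot declare unsafe definitions or new axioms. With both in hand, the result holds relative to the trusted computing base listed in the threat model, namely kernel soundness and prelude integrity. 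I would state that dependence explicitly rather than try to prove it.
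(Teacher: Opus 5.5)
Your first two paragraphs are essentially the paper's proof, and they already establish the statement: \textsc{Verified} is returned only after \textsc{KernelCheck} succeeds, and by checker fidelity (\Cref{lem:checker}) together with the kernel's type-correctness invariant, every inference step of the elaborated term is a kernel-checked judgement relative to the declared axioms. Your third paragraph is unnecessary and, as framed, weakens the result. A \texttt{sorry} does not make any step fail to type-check: it elaborates to the core axiom \texttt{sorryAx}, so it only enlarges the declared axiom set, which is \Cref{thm:novh}'s concern rather than this theorem's. Resting the argument on whitelist compliance would also make \Cref{thm:noded} conditional on a hypothesis it does not state and that \textsc{RuntimeVerify} never checks, since \texttt{verify\_axioms.py} runs only as a post-hoc audit.
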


\noindent Full proofs (checker-fidelity, emission-discipline, and
reserved-name lemmas; complete proofs of \Cref{thm:novh,thm:noded})
plus the emission-site table are in \Cref{app:safety-proofs,app:whitelist}.
The kernel call inside \texttt{run\_pipeline}
(\texttt{lake env lean}) is the primary safety gate at generation
time; the standalone \texttt{verify\_axioms.py} is a post-hoc
replay audit a third-party can re-run offline
(\Cref{app:repro,app:whitelist}). Worked examples of attested
proofs and per-source lifts are in
\Cref{app:worked-example,app:failure}.

\paragraph{Formalization provenance.}
\label{sec:approach:provenance}
Tier~1 and Tier~1.5 load fixture-committed gold types $\tau^{\mathrm{gold}}$
and audited per-source lifts $\Lambda(s)$ from a frozen fixture
(proposer--critic--kernel pipeline, no formalizer invoked at
evaluation); Tier~2 invokes an LLM formalizer with access to
$\Lambda(s)$ but no gold type. Full provenance (subagent pipeline,
human meta-supervision scope, curator yield, collusion firewall)
is in \Cref{app:provenance}.

\section{Empirical Evaluation}
\label{sec:eval}

We report three evaluations. The first two load fixture-committed
gold goal types, so they isolate the
kernelized trust architecture under ideal formalization:
\Cref{sec:eval:tier1} (Tier~1) establishes competitiveness with
same-tool baselines on unambiguous claims;
\Cref{sec:eval:tier15} (Tier~1.5) establishes
source-faithfulness under counterfactual prior conflict, where the
same-tool baseline on Sonnet drops to 80--90\% (no-tool 50--80\%)
while EG-VAR remains 100\% across both flip-magnitude regimes and
both model strengths. The third evaluation
(\Cref{sec:eval:tier2}, Tier~2) removes the gold-fixture bypass
and tests the deployment path where the LLM itself proposes the
formal goal.

\paragraph{Two axes.} We separate \emph{formal safety} (the
\textsc{Verified} label is kernel-accepted against the committed
trust artifacts; \Cref{thm:novh}) from \emph{semantic
faithfulness} (the formalized goal $\tau$ matches the
natural-language claim; measured by \texttt{answer\_match} against
frozen gold answers). Tiers~1 and~1.5 load
$\tau=\tau^{\mathrm{gold}}$ from the frozen fixture, so semantic
faithfulness holds by construction; Tier~2 invokes the LLM
formalizer, and semantic faithfulness becomes the primary
empirical question.

\paragraph{Setup.} Tier~1 and Tier~1.5 share four \emph{rungs}
(\Cref{tab:rungs}):
Table-only (single-shot, no tools);
Tools-open (LLM with full runtime tool surface, no per-source menu
enforcement);
Tools-curated (per-source menu enforced; same tool surface as
EG-VAR);
EG-VAR (kernel-checked, fixed-goal mode with the curator's typed
goal). All rungs use the same model, prompt structure,
and answer judge (\texttt{answers\_match}); EG-VAR additionally requires
that the kernel mints a \texttt{Verified} witness whose Sigma extract
agrees with the gold answer. We evaluate two models:
\texttt{claude-sonnet-4-6} and \texttt{claude-haiku-4-5} (a
stronger and a weaker model from the same family). All runs use temperature~0; EG-VAR is deterministic at
temperature~0 and we report 1-rep smoke evaluations as canonical for
EG-VAR with corroborating multi-rep partial runs (provenance manifest in
\Cref{app:repro}).

\begin{table}[t]
\centering\small
\setlength{\tabcolsep}{4pt}
\begin{tabular}{@{}lll@{}}
\toprule
Rung & Tools & Per-source artifact \\
\midrule
Table-only & none & --- (table only) \\
Tools-open & full surface & none enforced \\
Tools-curated & menu-enforced & per-source menu \\
EG-VAR & menu-enforced & menu $+$ Lean goal $+$ lifts \\
\bottomrule
\end{tabular}
\caption{Four-rung ladder isolating tool contribution
(Table-only\,$\to$\,Tools-open), per-source menu
(Tools-open\,$\to$\,Tools-curated), and kernelized fixed-goal
discipline (Tools-curated\,$\to$\,EG-VAR).}
\label{tab:rungs}
\end{table}

\begin{table}[!htb]
\centering\small
\begin{tabular}{lr@{ }l}
\toprule
Rung & \multicolumn{2}{c}{Match} \\
\midrule
Table-only & 107/120 & (89.2\%) \\
Tools-open & 113/120 & (94.2\%) \\
Tools-curated & 114/120 & (95.0\%) \\
\textbf{EG-VAR} & \textbf{120/120} & \textbf{(100.0\%)} \\
\bottomrule
\end{tabular}
\caption{Tier~1 ladder, $n{=}120$ unambiguous TableBench claims,
\texttt{claude-sonnet-4-6}, temperature~0.}
\label{tab:tier1}
\end{table}

\subsection{Tier~1 gold-goal benchmark: baseline ladder}
\label{sec:eval:tier1}

Our LLM-based curator pipeline selects $n{=}120$ unambiguous
TableBench claims spanning argmax, sum, count, top-$k$, and
aggregation shapes; for each claim it produces the typed Lean goal
and the per-source lifts (the trust artifact EG-VAR consumes;
\Cref{app:provenance}). All four rungs are evaluated on Sonnet at
temperature~0. \Cref{tab:tier1} reports the headline ladder. EG-VAR
attains 120/120 (100\%) joint formal safety and semantic
faithfulness on the gold-formalization path: every claim's kernel
proof passes the whitelist replay (\Cref{app:repro}, formal safety
axis) \emph{and} the extracted Sigma witness matches the curator
gold (semantic-faithfulness axis). The same-tool same-model
baseline (Tools-curated) plateaus at 114/120 (95\%) on the latter
alone (no formal-safety surface: baselines cannot abstain); removing
the per-source menu costs one further claim (Tools-open 113/120);
single-shot table-only floor is 107/120 (89.2\%).

\subsection{Tier~1.5 counterfactual benchmark: source-faithfulness under prior conflict}
\label{sec:eval:tier15}

Tier~1.5 isolates a qualitatively different failure mode the
Tier~1 ladder under-counts. We construct synthetic counterfactual table
variants: real TableBench tables with one or two cell values
overwritten to contradict the model's world-knowledge prior. For each
variant we ask a binary-comparison or aggregation claim and classify
the rung's output as \emph{source-faithful} (matches the table),
\emph{prior-override} (matches the un-injected real-world value), or
\emph{wrong-other}. Tier~1.5's \emph{source-faithfulness} column is
the semantic-faithfulness axis under counterfactual-injection
pressure; EG-VAR's 100\% result combines this with formal-safety
(kernel replay) that the baselines lack.

\paragraph{Two flip-magnitude regimes.} The \emph{extreme-flip
panel} uses magnitude-10-plus reversals (e.g.,\ Italy GDP
$\to{}$\$50, Patriots~$\to$~0--16, Sinopec $\to{}$\$50) across 5~domains;
the \emph{subtle-flip panel} uses 10--25\% reversals (Mont Blanc
$\to{}$4500\,m vs Monte Rosa~4634\,m, Madrid GDP~90 vs Berlin~95) on
a panel shifted to stable-scalar priors (country populations,
EU country area, European city GDP, Forbes top companies, Alpine
peak elevations). Each panel is
5~domains $\times$~2~entity-pair injections $\times$~3~claim shapes
$=$~30~claims per panel (20 binary-compare + direction, 10
cell-lookup; the cell-lookup set doesn't trigger prior-override
and is excluded here). \Cref{tab:tier15} reports the
$n{=}20$ binary-compare + direction cells; both models,
1~rep (temp~0, deterministic).

\begin{table}[t]
\centering\small
\begin{tabular}{llrr}
\toprule
Panel & Rung & Sonnet & Haiku \\
\midrule
\multirow{4}{*}{extreme-flip}
 & Table-only    & 16/20 & 15/20 \\
 & Tools-open    & 16/20 & 20/20 \\
 & Tools-curated & 16/20 & 20/20 \\
 & \textbf{EG-VAR} & \textbf{20/20} & \textbf{20/20} \\
\midrule
\multirow{4}{*}{subtle-flip}
 & Table-only    & 10/20 & \phantom{0}8/20 \\
 & Tools-open    & 18/20 & 20/20 \\
 & Tools-curated & 18/20 & 20/20 \\
 & \textbf{EG-VAR} & \textbf{20/20} & \textbf{20/20} \\
\bottomrule
\end{tabular}
\caption{Tier~1.5 counterfactual ladder: source-faithful count on
binary-compare and direction claims, $n{=}20$ per panel (temp~0,
1 rep). Residual $=$ prior-override (no wrong-other cases).}
\label{tab:tier15}
\end{table}

The extreme-flip panel narrows the residual same-tool gap to the
single most entrenched prior family (country populations). The
subtle-flip panel generalizes prior-override across stable-scalar
domains; tools rescue most of it, and the kernel closes the rest.

Per-claim robustness axes (authority-cue, model strength, domain
coverage) and the same-tool failure-mode taxonomy are in
\Cref{app:tier15-taxonomy}.

\subsection{Tier~2 end-to-end evaluation: LLM formalizer}
\label{sec:eval:tier2}

Tier~1 and Tier~1.5 bypass the NL-to-Lean formalizer step by loading
the fixture-committed goal type from a frozen fixture
(\Cref{sec:approach:provenance}). Tier~2 removes that bypass: at
deployment time, the runtime invokes the LLM formalizer
(\Cref{sec:approach:safety}) to propose
\texttt{lean\_goal\_type} per claim, then proceeds with the same
analyst-formalizer-solver-kernel loop. This is the fully automated
regime; no curator interaction at runtime.

\paragraph{Setup.} $n{=}120$ claims, same TableBench curated
fixture. \texttt{claude-sonnet-4-6} at temperature~0. Pipeline:
analyst $\to$ formalizer $\to$ solver. The audited per-source
formalization $\Lambda(s)$ is still loaded; only the
claim-level goal type is LLM-proposed. We classify every output
into one of six mutually-exclusive reporting categories, based on
three observables: \emph{type-equality} with the curator's gold
goal type (kernel-\texttt{rfl} on string-normalized forms),
\emph{kernel acceptance}, and \emph{answer-match} against the
gold answer (\Cref{tab:tier2-taxonomy}).

\begin{table}[t]
\centering\small
\begin{tabular}{@{}lrr@{}}
\toprule
Category & Sonnet & Opus \\
\midrule
Correct                            & 84.2\% & 87.5\% \\
Ambiguous claim (logged)           & 4.2\%  & 4.2\% \\
Benchmark gold error (surfaced)    & 1.7\%  & 1.7\% \\
\textbf{Semantic formalizer error}  & \textbf{3.3\%} & \textbf{1.7\%} \\
Honest abstain                     & 2.5\%  & 5.0\% \\
Solver gave up                     & 4.2\%  & 0.0\% \\
\bottomrule
\end{tabular}
\caption{Tier~2 decomposition, $n{=}120$. Category definitions in
\Cref{app:tier2-taxonomy}.}
\label{tab:tier2-taxonomy}
\end{table}

\paragraph{Formal safety versus semantic faithfulness.} Tier~2
measures two distinct properties. \emph{Formal safety} is
structural and model-independent: every \textsc{Verified} proof
still depends on an attested tool leaf, regardless of which LLM
proposed the goal type (\Cref{thm:novh}). \emph{Semantic
faithfulness} is operational and model-dependent: it measures
whether the LLM-proposed goal matches the frozen
reading of the claim. Table~\ref{tab:tier2-taxonomy} reports the
semantic-faithfulness axis per model; the \emph{Semantic formalizer
error} row is the pure formalizer-error rate.

\paragraph{Tier~2 under counterfactual pressure.} To check that the
LLM-as-formalizer pipeline is not just a curator-fixture artifact,
we re-run Tier~2 Sonnet on the Tier~1.5 extreme-flip panel,
restricted to binary-compare and direction claims
($n{=}20$, prior-override-injected tables). Every claim
produces a source-faithful
\textsc{Verified} output (\textbf{20/20 SF}), with the extracted
Sigma witness matching the table-injected answer rather than the
model's world-knowledge prior. The Tier~1.5 same-tool baseline drops to 16/20 on the same
extreme-flip tables (\Cref{sec:eval:tier15}); Tier~2 with
end-to-end LLM formalization matches EG-VAR's gold-formalization
rung on this subset.

\section{Discussion}
\label{sec:discuss}

\subsection{Limitations}
\label{sec:discuss:limits}

Three scope limitations bound the empirical claims.
\textbf{(i)}~Tier~1/1.5 bypass the formalizer by loading
fixture-committed gold types from a frozen fixture
(\Cref{sec:approach:provenance}); Tier~2 removes that bypass.
\textbf{(ii)}~Tools, per-source lifts, and the L1 adapter are
trusted; a semantically wrong audited lift can certify a wrong
formalized claim (\Cref{sec:approach:pertable}).
\textbf{(iii)}~All evaluations are on single TableBench tables;
multi-source, KG, and streaming sources are out of scope.
Detailed scope limits are in \Cref{app:limits-future}.

\subsection{Governance translation}
\label{sec:discuss:gov}

\Cref{thm:novh,thm:noded} and the no-upcast grade discipline yield
three governance-relevant structural properties.

\textbf{(G1) Auditable evidence trails.} Every \texttt{Verified}
output carries the generated Lean proof file as its replay
artifact: a third-party auditor with the kernel, per-source lifts,
and tool adapter can re-typecheck the claim from the attested
payload without re-running the LLM. The proof term is a truthmaker
in~\citet{fine2017truthmaker}'s sense: the specific attested state
of affairs that grounds the claim. Composition of evidence across
multiple lifts carries the algebraic structure of a provenance
semiring~\citep{green2007provenance}. The trust artifact
is enumerable (L1 vocabulary, per-source lifts, tool adapter,
kernel); audit cost is \emph{amortized}, each per-source lift
audited once at curation time and reused across all subsequent
claims.

\textbf{(G2) Typed uncertainty.} Grade discipline (no-upcast,
minimum-of-premises) makes uncertainty inspectable rather than a
hidden confidence number. The graded \texttt{Evidence}~$g$~$w$
object is operationally close to a justification
term~\citep{artemov2008logic}, where evidence is explicit rather
than folded into a model's posterior. A \textsc{Plausible} output cannot be
silently re-introduced downstream as \textsc{Verified}; an auditor
can query which claims arrived at which grade and why.

\textbf{(G3) Honest abstention.} A kernel rejection surfaces as a
user-visible \textsc{Abstain}, not a silent fallback guess.
Refusal thresholds are externally specifiable (a downstream policy
can require \textsc{Verified} and reject \textsc{Abstain}, or admit
\textsc{Abstain} as a flagged state, without re-implementing the
trust boundary); the abstention rate is measurable per-source; the
failure mode is auditable rather than learned.

\paragraph{Portability beyond tables.} The L1/L2/lift pattern is
source-agnostic: SQL views, knowledge graphs, and sensor streams
each admit the same per-source-formalization treatment under a
matching attestation discipline. The empirical cross-substrate panel
is itemized in \Cref{sec:discuss:future}; the broader substrate-
adoption argument (formalization gap shrinks as sources become
formal; \Cref{app:scaling}).

\subsection{Future work scoped to governance reach}
\label{sec:discuss:future}

Three extensions: (1)~fine-tuning the NL formalizer on the
curator-provided (claim, gold type) corpus with kernel acceptance as
the reward signal, an accuracy intervention that tightens the
Tier~2 residual rate without weakening
Theorem~\ref{thm:novh}; (2)~multi-source attestation with
conflict-downgrade rules (\textsc{Verified}~$\to$~\textsc{Supported}
on detected disagreement); (3)~cross-substrate panel (KG-grounded,
SQL-grounded) to test the L1/L2 portability claim
(\Cref{sec:discuss:gov}). Detail in \Cref{app:limits-future}.

\section{Conclusion}
\label{sec:conclusion}

EG-VAR is an architecture for verified empirical claims grounded
in attested external data, instantiated on tables as the first
end-to-end substrate. Tool access alone does not suffice: frontier
LLMs prior-override counterfactual tables on a measurable fraction
of claims. EG-VAR addresses this architecturally: the Lean~4
kernel mints \textsc{Verified} claims only under tool attestation
and type-checks every accepted proof step. TableBench Tier~1
($n{=}120$) closes the same-tool gap to zero; Tier~1.5
counterfactuals ($n{=}20$ per panel, two flip regimes, two models)
stay 100\% source-faithful while same-tool drops to 80--90\%
(no-tool 50--80\%); end-to-end Tier~2 with LLM formalization
bounds formalizer error at 3.3\% (Sonnet), 1.7\% (Opus).

The core contribution is structural. Theorem~\ref{thm:novh} rules
out unsupported \textsc{Verified} outputs model-independently;
Theorem~\ref{thm:noded} guarantees that every accepted deduction
type-checks under the declared axiom set. Remaining failures are
not silent hallucinations but measurable, auditable formalization
failures, logged ambiguities, or honest abstentions. This is a
\emph{constructive path toward} eliminating LLM hallucination in
empirical inference: the safety floor is architectural today; the
semantic-formalization residual is an explicit accuracy target
addressable by formalizer fine-tuning on the curator corpus,
consensus formalization, and longer-term typed sidecars on sources
and LLM-generated documents.

For AI governance, three properties become enforceable at the
system boundary rather than learned behaviours: auditable evidence
trails via replayable per-claim Lean proofs, typed uncertainty
under no-upcast grade discipline, and externally specifiable
refusal thresholds via honest abstention. Verifiable grounding
generalises as an architectural protocol rather than a
table-specific trick: the same L1/L2/lift discipline can in
principle apply to SQL views, knowledge graphs, typed registries,
and other attested sources. Validating beyond tables is the
immediate empirical next step; the longer-run bet is that as
sources and documents acquire typed sidecars, more of the
formalization burden moves from inference loops into reusable
infrastructure.

\section*{Acknowledgements}

I thank the TAIGR reviewers for feedback that sharpened the
governance framing and the distinction between unsupported
\textsc{Verified} outputs and semantic-formalization failures.
JR is partially supported by a Vannevar Bush Faculty Fellowship
ONR N000142312863.

\bibliography{egvar}
\bibliographystyle{icml2026}

\onecolumn
\appendix

\section{Extended related work}
\label{app:related-extended}

\paragraph{Prior-vs-source conflict in LLMs.} The closest empirical
prior is ClashEval~\citep{clasheval2024}, which evaluates frontier
models on prior-vs-context conflict across six domains. They
report $>$60\% prior-override on incorrect retrieved content and
that Claude scores highest on prior retention among the frontier families evaluated
(Context-Bias $0.157$ for Opus, $0.201$ for Sonnet, $0.245$ for
Gemini~1.5, $0.304$ for GPT-4o). Our counterfactual stress tests
reproduce this ranking quantitatively for Sonnet
(\Cref{sec:eval:tier15}). We extend the analysis along two axes
prior work has not separated: (i)~\textit{authority-cue
robustness} --- explicit ``according to the table'' framing rescues
some claim shapes but not the strongest prior-bound ones; and
(ii)~\textit{flip-magnitude regime} --- subtle plausible flips
trigger prior-override on 4 of 5 stable-scalar domains where
extreme flips trigger only one, suggesting the model treats
outrageous tool outputs as ``alternate scenarios.'' This
disambiguates \emph{dynamicity}~\citep{dynamicqa2024} from
\emph{popularity}~\citep{mallen2023whennot} in the trigger
pattern, and is consistent with \citet{li2023acl}'s finding that
counterfactual behavior depends on interactions between world
knowledge and contextual cues.

\paragraph{Symbolic factual verification.}
TabVer~\citep{tabver2024} and ProoFVer~\citep{proofver2022} each
verify factual statements against evidence using natural logic. Each
relies on learned components to construct or align the proof, and
neither uses a proof-assistant kernel to check the entire
source-to-statement derivation; mapping errors can therefore still
yield incorrect verdicts. FoVer~\citep{fover2025} translates
chain-of-thought into
first-order logic checked by Z3, but verifies \emph{reasoning
consistency} rather than empirical grounding --- a CoT can be
internally consistent yet factually disconnected from the source.
Post-hoc fact-checking pipelines such as FacTool~\citep{factool2023},
RARR~\citep{rarr2023}, SAFE~\citep{safe2024}, and
CoVe~\citep{cove2023} attempt detection after generation rather
than structural grounding during generation; they catch some
hallucinations but cannot guarantee abstention on ungrounded
outputs. Commercial guardrails such as AWS Bedrock's automated
reasoning checks~\citep{aws_automated_reasoning} occupy a related
space at the policy-rule layer. EG-VAR differs by replacing the
learned/shallow checker with a proof-assistant kernel and a typed
evidence calculus, structurally translating kernel rejection into
user-visible abstention.

\paragraph{Proof-assistant theorem proving (math).} A separate
large literature applies LLMs to mathematical theorem proving via
Lean, Coq, or Isabelle: AlphaProof~\citep{alphaproof2024},
Aristotle~\citep{aristotle2025},
DeepSeek-Prover~\citep{deepseek_prover_v15_2024},
Hilbert~\citep{hilbert_apple_2025}. These systems share our trust
model (LLM untrusted, kernel trusted) but operate on
\emph{self-contained} mathematical claims --- proof obligations
derivable from the calculus alone, with no external grounding.
The novel architectural commitment in EG-VAR is the
\texttt{mkVerified} mint rule
(\Cref{sec:approach:mkverified}), which requires an
\texttt{Attested\_T} from the tool runtime as a hypothesis ---
extending the proof-assistant trust model to empirical claims
grounded in deterministic external sources. Autoformalization
systems such as DTV~\citep{dtv2024} use a similar LLM-as-
formalizer pattern; Tier~1 and Tier~1.5 bypass the
deployment-time formalizer via fixture-committed gold
formalizations generated by the proposer--critic--kernel pipeline,
isolating the kernel-discipline contribution.

\paragraph{Lineage in CS verification.} The systems-level concerns
that shape EG-VAR --- separating semantic specification from proof
strategy, refusing to weaken the specification when the proof
gets hard, and inheriting kernel-checked discipline from a
substrate designed for harder claims --- are old questions in the
verification literature. \citet{raggi2022representation} formalize
that semantically equivalent encodings can have very different
proof difficulty, motivating our refusal to add cross-predicate
equivalence lemmas in the kernel. \citet{bundy1991proof}
introduced explicit proof planning, anticipating our
formalizer/solver information barrier.
CEGAR~\citep{clarke2003cegar} established the discipline of
refining the verification plan, not the semantic target, when
proofs fail --- the same discipline our masked-table architecture
enforces on the formalizer. Differential assertion
checking~\citep{lahiri2013differential} uses relative
specifications to make cross-version program changes checkable;
our per-source formalization protocol similarly makes an otherwise
implicit comparison boundary explicit, but for empirical sources.
EG-VAR inherits these moves and applies them at a different boundary:
not program correctness, but LLM-mediated empirical claim
verification. The novelty is the transfer and recombination, not
the invention of the underlying discipline.

\paragraph{Hardware-attested AI verification.} Attestable
Audits~\citep{schnabl2025attestable} use Trusted Execution
Environments to emit remote-attestation signatures binding
$\mathit{Hash}(\text{model})$, $\mathit{Hash}(\text{audit code})$,
and a benchmark digest, proving that approved bytes executed on
approved inputs. Proofs of Autonomy~\citep{grigor2025proofs}
define a pluggable component-proof framework and instantiate it
with MPC-assisted TLS transcripts (Web Proofs), binding agent
actions to verifiable execution traces specified by an immutable
Agent Card (see also~\citealp{vet_your_agent_2025}). Both verify
execution
provenance --- \emph{who} ran the computation, on \emph{what}
bytes --- not whether the computation's claims are true. EG-VAR
is complementary: we typecheck a deductive derivation of each
empirical claim from attested tool payloads, ruling out verified
hallucinations at the logical level rather than the cryptographic
level. An EG-VAR runtime could in principle run inside such an
enclave for stack composition.

\section{Per-claim diff for the Tier~1 ladder}
\label{app:per-claim-ladder}

The supplement includes
\texttt{scripts/tier1/build\_baseline\_ladder\_csv.py}, which
generates \texttt{baseline\_ladder\_per\_claim.csv} from the four
E1 rung JSONLs: one row per Tier~1 claim with the goal type, gold
answer, and per-rung (Table-only / Tools-open / Tools-curated /
EG-VAR) status, answer string, and match flag. The 6 claims that
distinguish EG-VAR from Tools-curated are flagged in the output.

\section{The EG-VAR Lean~4 library}
\label{app:lean-library}

The EG-VAR prelude is shipped as a self-contained Lean~4 library at
\texttt{src/lean/} with a Lake build
(\texttt{lakefile.toml}). The library is the kernel-trusted portion
of the trust ledger (\Cref{sec:approach:arch}): every generated
claim module imports it verbatim, and \Cref{thm:novh}'s
whitelist-bucket \textbf{B1} refers to its declarations. This
appendix documents the module layout, the extension surface, and
the third-party replay-audit contract.

\paragraph{Module layout.} The library root
\texttt{EGVar.lean} re-exports eight modules:
\begin{itemize}
\setlength{\itemsep}{1pt}
\item \texttt{EGVar.Basic} --- sorts and kernel mint rule: the
abstract \texttt{WProp} world-proposition sort, the \texttt{Grade}
inductive (\texttt{Speculative} $\prec$ \texttt{Plausible} $\prec$
\texttt{Supported} $\prec$ \texttt{Verified}) with meet/join lattice
laws, the \texttt{Evidence} family,
\texttt{Attested\_T}/\texttt{ClaimTag\_T}/\texttt{claimWProp}, and
the sole \texttt{Verified}-minter \texttt{mkVerified}. Contains
\texttt{downcast} (the only grade-weakening step) and the
common-sense \texttt{CSAttestation} / \texttt{mkPlausible} path.
\item \texttt{EGVar.Safety} --- the structural lemmas used in
\Cref{app:safety-proofs} (evidence-closure discipline, downcast
non-upgrade).
\item \texttt{EGVar.TableL1} --- L1 storage-vocabulary shells:
\texttt{RowHandle}, \texttt{CellAt}, \texttt{Cond} constructors
(\texttt{all}, \texttt{propEq}, \texttt{propContains},
\texttt{propGte}, \texttt{and}, \texttt{or}), and the L1 predicate
shells (\texttt{SumCells}, \texttt{CountCells},
\texttt{CountDistinctCells}, \texttt{MeanCells},
\texttt{MaxCellValue}/\texttt{MinCellValue},
\texttt{MaxCell}/\texttt{MinCell}, \texttt{TopK*Cells},
\texttt{BottomK*Cells}).
\item \texttt{EGVar.WorldOntology} --- L2 world-property predicates:
\texttt{HasProperty}, \texttt{Value} (with \texttt{asRat},
\texttt{ValueGt}), and the L2 predicates that per-source lifts
target (\texttt{SumWhere}, \texttt{CountWhere}, \texttt{MeanWhere},
\texttt{MaxWhere}/\texttt{MinWhere},
\texttt{ArgmaxWhere}/\texttt{ArgminWhere}, \texttt{TopK*Where},
\texttt{BottomK*Where}). This module defines the vocabulary in
which $\Lambda(s)$ entries are phrased.
\item \texttt{EGVar.WorldRules} --- structural world-proposition
combinators (\texttt{PLift} propositional lifting, \texttt{PProd}
conjunction as Sigma-tail, sigma-binding utilities).
\item \texttt{EGVar.Tactics} --- proof-tactic helpers used by the
solver (Sigma-witness \texttt{refine} patterns, \texttt{exact}
sugar for L2 discharge).
\item \texttt{EGVar.Ontology}, \texttt{EGVar.Rules} --- legacy
weather/finance domain axioms kept for backward compatibility with
Phase~1 fixtures; not imported by Tier~1/1.5/2 generated modules.
\end{itemize}

\paragraph{Library build.} The Lake manifest declares a single
library target \texttt{EGVar} and an executable target
\texttt{egvar} (rooted at \texttt{Main.lean}) for interactive
smoke-testing. A third-party user runs \texttt{lake build} once to
compile the library, then \texttt{lake env lean
<generated\_claim>.lean} to elaborate a generated proof against the
prelude. The runtime orchestrator invokes exactly this command at
\texttt{pipeline.py:4428}; the replay-audit script
\texttt{verify\_axioms.py} invokes the same elaboration with
additional \texttt{\#print axioms} processing.

\paragraph{Extension surface.} Adding a new tool to EG-VAR reduces
to three kernel-level steps, each kernel-checked: (i)~declare a
new \texttt{ToolId} (a pure identifier) and its \texttt{Query\_T}
and \texttt{Payload\_T} families in a user-supplied module;
(ii)~extend the L1 predicate menu in the style of
\texttt{EGVar.TableL1} if the tool exposes a new storage shape;
(iii)~add the corresponding L2 predicates and per-source lifts
targeting them in the style of \texttt{EGVar.WorldOntology}. The
mint rule \texttt{mkVerified} is unchanged: every new tool reuses
the same \texttt{Evidence\,Verified}-constructor path and cannot
weaken \Cref{thm:novh}. Adding a new ontology role-slot (entity or
property) is purely data: both are \texttt{String} literals under
R5/R10 per \Cref{sec:approach:pertable} and require no kernel
changes.

\paragraph{Third-party replay-audit contract.} A deployment that
publishes EG-VAR-generated answers as trust-bearing artifacts must
publish the following bundle for each answer: (a)~the generated
\texttt{.lean} module (the \textit{leanFile} field of the proof
object, \Cref{sec:approach:safety}); (b)~the pinned commit hash of
the EG-VAR library and of \texttt{data/tier1\_table\_formalizations.json};
(c)~the corresponding entry of the evidence log describing which
tool executions backed the observation-leaf axioms. Any third party
holding this bundle and a Lean~4 toolchain can re-run the audit by
cloning the EG-VAR library at the pinned commit, running
\texttt{lake build}, then running
\texttt{python -m runtime.verify\_axioms <claim\_module>.lean}
(which invokes the kernel, reads \texttt{\#print axioms}, and
checks the bucket predicate); no LLM is invoked during replay. We
treat this contract as central to the paper's
\emph{attestable} claim; the library is designed so that the
boundary between trusted and untrusted artifacts is mechanically
checkable offline, not contingent on the original generation
pipeline being available.

\paragraph{Line counts.} The eight modules total $\approx$1.35
kLOC: \texttt{Basic} 192, \texttt{Ontology} 210 (legacy),
\texttt{Rules} 148 (legacy), \texttt{Safety} 86, \texttt{TableL1}
134, \texttt{Tactics} 217, \texttt{WorldOntology} 278,
\texttt{WorldRules} 82. Excluding the two legacy modules the active
prelude is $\approx$1.0 kLOC. This is the full trusted Lean code
that a third-party auditor must read to evaluate \Cref{thm:novh}'s
\textbf{B1} bucket; the other three buckets
(\textbf{B2}--\textbf{B4}) are verified by the mechanical
\texttt{verify\_axioms.py} predicate.

\subsection{Code listings}
\label{app:lean}

We reproduce the key signatures so that the main-text
references to \texttt{Evidence}, \texttt{mkVerified}, \texttt{Cond},
\texttt{HasProperty}, \texttt{ArgmaxWhere}, and \texttt{ValueGt}
typecheck against the actual kernel definitions imported by every
generated proof.

\paragraph{Sorts and the kernel mint rule
(\texttt{src/lean/EGVar/Basic.lean}).}
{\footnotesize
\begin{verbatim}
-- World propositions: separate sort from Prop
opaque WProp : Type

-- Evidence grades, ordered Verified > ... > Speculative
inductive Grade where
  | Verified
  | Supported
  | Plausible
  | Speculative

-- Indexed evidence type
opaque Evidence : Grade -> WProp -> Type

-- Sole minter of Evidence Grade.Verified
axiom mkVerified
  {T : ToolId} {q : Query_T T} {p : Payload_T T q}
  (att : Attested_T T q p)
  {w : WProp}
  (mem : ClaimOf w (interp_T T q p)) :
  Evidence Grade.Verified w

-- Downcast: only direction
axiom downcast {g1 g2 : Grade} {w : WProp}
  (h : g1 >= g2) (e : Evidence g1 w) :
  Evidence g2 w
\end{verbatim}}

\paragraph{World ontology
(\texttt{src/lean/EGVar/WorldOntology.lean}).}
{\footnotesize
\begin{verbatim}
-- Value type: numeric via Rat, or string identifier
inductive Value where
  | int  (n : Int)
  | rat  (r : Rat)
  | str  (s : String)

-- Core L2 world facts
axiom HasProperty
  (entity : String) (property : String)
  (value : Value) : WProp

axiom Relation
  (subject : String) (rel : RelName)
  (object : String) : WProp

-- Aggregate predicates
axiom SumWhere   (col : String) (c : Cond) (v : Value) : WProp
axiom CountWhere (col : String) (c : Cond) (n : Value) : WProp
axiom MeanWhere  (col : String) (c : Cond) (v : Value) : WProp
axiom MaxWhere   (col : String) (c : Cond) (v : Value) : WProp
axiom MinWhere   (col : String) (c : Cond) (v : Value) : WProp

-- Entity-anchored aggregate (level 3 of App H hierarchy)
axiom ArgmaxWhere
  (col : String) (c : Cond)
  (entity : String) (v : Value) : WProp
axiom ArgminWhere
  (col : String) (c : Cond)
  (entity : String) (v : Value) : WProp

-- Decidable Prop-level value comparisons
def ValueGt (a b : Value) : Prop :=
  match a.toRat, b.toRat with
  | some x, some y => x > y
  | _, _ => False
-- (analogous: ValueLt, ValueGte, ValueLte, ValueEq;
-- all carry Decidable instances)

-- Composable filter
inductive Cond where
  | propEq       (col : String) (v : Value) : Cond
  | propGt       (col : String) (v : Value) : Cond
  | propLt       (col : String) (v : Value) : Cond
  | propGte      (col : String) (v : Value) : Cond
  | propLte      (col : String) (v : Value) : Cond
  | propContains (col : String) (v : Value) : Cond
  | and (c1 c2 : Cond) : Cond
  | or  (c1 c2 : Cond) : Cond
  | not (c : Cond)     : Cond
  | all                : Cond
\end{verbatim}}

\paragraph{L1 storage primitives
(\texttt{src/lean/EGVar/TableL1.lean}).}
{\footnotesize
\begin{verbatim}
-- Row-handle (opaque): the per-source lift binds to it
opaque RowHandle : Type

-- L1 facts: tools attest these directly
axiom CellAt
  (r : RowHandle) (col : String) (v : Value) : WProp

axiom MaxCellValue
  (col : String) (c : Cond) (v : Value) : WProp
axiom MinCellValue
  (col : String) (c : Cond) (v : Value) : WProp

-- Row-anchored aggregates: identify which row achieves the max/min
axiom MaxCell
  (col : String) (c : Cond)
  (r : RowHandle) (v : Value) : WProp
axiom MinCell
  (col : String) (c : Cond)
  (r : RowHandle) (v : Value) : WProp
\end{verbatim}}

\paragraph{Per-source lift example
(curated for the lunar-craters table).} The lift binds an L1
\texttt{MaxCell} witness with an L1 \texttt{CellAt} witness on the
``name'' column to mint an L2 \texttt{ArgmaxWhere} claim:

{\footnotesize
\begin{verbatim}
axiom craters_lift_argmax_via_name_proj
  (col : String) (c : Cond)
  (r : RowHandle) (name : String) (v : Value) :
    Evidence Grade.Verified (MaxCell col c r v) ->
    Evidence Grade.Verified
      (CellAt r "name" (Value.str name)) ->
    Evidence Grade.Verified
      (ArgmaxWhere col c name v)
\end{verbatim}}

This is the curator's interpretive commitment for this table:
``name'' is the row-identifying column, so an
\texttt{argmax\_with\_id} tool result lifts uniquely to an
\texttt{ArgmaxWhere} claim under any \texttt{Cond}. A different
table whose row-identifying column is, say, \texttt{state} carries
its own analogous lift; the two are independent.

\section{Per-source lift catalog (countries-population table)}
\label{app:lift-catalog}

To make ``per-source formalization is the trust artifact'' concrete,
we reproduce the lift catalog the curator pipeline produced for the
countries-population table (\texttt{04744378\dots} in TableBench).
This is the table that all of no-cue pilot, authority-cued pilot, and subtle-flip panel D1 use, so the
catalog underpins the empirical claims in
\Cref{sec:eval:tier15}. The full catalog has 63 lifts; we show one
representative per L2 predicate cluster.

\paragraph{Curator's interpretation declaration.} ``Country populations
2013. Identity column = \texttt{country (or dependent territory)}.
Numeric columns: rank, july 1 2013 projection, \% of pop, average
relative annual growth (\%), average absolute annual growth (all
strict numerics).''

\paragraph{Aggregate-only lifts (level 1 in App H hierarchy).} One
lift per (column, aggregate operation) pair. All have the same
shape: L1 column-strict aggregate $\to$ L2 \texttt{*Where} claim.
Examples on the population column:
{\footnotesize
\begin{verbatim}
axiom lift_04744378_pop_sum
  (c : Cond) (v : Value) :
    Evidence Grade.Verified
      (SumCells "july 1 , 2013 projection" c v) ->
    Evidence Grade.Verified
      (SumWhere "july 1 , 2013 projection" c v)

axiom lift_04744378_pop_mean
  (c : Cond) (v : Value) :
    Evidence Grade.Verified
      (MeanCells "july 1 , 2013 projection" c v) ->
    Evidence Grade.Verified
      (MeanWhere "july 1 , 2013 projection" c v)

axiom lift_04744378_pop_max
  (c : Cond) (v : Value) :
    Evidence Grade.Verified
      (MaxCellValue "july 1 , 2013 projection" c v) ->
    Evidence Grade.Verified
      (MaxWhere "july 1 , 2013 projection" c v)

axiom lift_04744378_pop_count_where
  (c : Cond) (v : Value) :
    Evidence Grade.Verified
      (CountCells "july 1 , 2013 projection" c v) ->
    Evidence Grade.Verified
      (CountWhere "july 1 , 2013 projection" c v)
\end{verbatim}}
Analogous lifts exist for \texttt{MinWhere}, \texttt{TopKSumWhere},
\texttt{TopKMeanWhere}, \texttt{BottomKSumWhere},
\texttt{BottomKMeanWhere}, and \texttt{CountDistinctWhere}.

\paragraph{Entity-anchored lifts (level 3 in App H hierarchy).} The
curator commits that \texttt{country (or dependent territory)} is
the row-identifying column. Each row-identifying lift takes an L1
row-anchored aggregate (\texttt{MaxCell} / \texttt{MinCell}) plus
an L1 \texttt{CellAt} on the identity column to mint an
\texttt{ArgmaxWhere} or \texttt{ArgminWhere} L2 claim.
{\footnotesize
\begin{verbatim}
axiom lift_04744378_pop_argmax_via_country
  (col : String) (c : Cond)
  (r : RowHandle) (name : String) (v : Value) :
    Evidence Grade.Verified (MaxCell col c r v) ->
    Evidence Grade.Verified
      (CellAt r "country (or dependent territory)"
                (Value.str name)) ->
    Evidence Grade.Verified
      (ArgmaxWhere col c name v)

axiom lift_04744378_pop_argmin_via_country
  (col : String) (c : Cond)
  (r : RowHandle) (name : String) (v : Value) :
    Evidence Grade.Verified (MinCell col c r v) ->
    Evidence Grade.Verified
      (CellAt r "country (or dependent territory)"
                (Value.str name)) ->
    Evidence Grade.Verified
      (ArgminWhere col c name v)
\end{verbatim}}
The argmax/argmin lifts are parametric over the measure column ---
one lift per identity column suffices to cover argmax claims on any
numeric column, which is why the full catalog has 5 entity-anchored
entries (one per identity choice), not 5$\times$5.

\paragraph{Cell-lookup lift.} A single lift mints
\texttt{HasProperty} claims by composing two L1 \texttt{CellAt}
attestations: the entity at row $r$ in the identity column equals
$e$, and the value at row $r$ in the measure column equals $v$.
{\footnotesize
\begin{verbatim}
axiom lift_04744378_pop_cell_lookup
  (e : String) (col : String)
  (r : RowHandle) (v : Value) :
    Evidence Grade.Verified
      (CellAt r "country (or dependent territory)"
                (Value.str e)) ->
    Evidence Grade.Verified (CellAt r col v) ->
    Evidence Grade.Verified
      (HasProperty e col v)
\end{verbatim}}

\paragraph{What this catalog buys (and what it doesn't).}
Composition is by lift: any L2 claim using \texttt{SumWhere},
\texttt{MeanWhere}, \texttt{MaxWhere}/\texttt{MinWhere},
\texttt{ArgmaxWhere}/\texttt{ArgminWhere}, \texttt{CountWhere},
\texttt{TopKSumWhere}, etc.~over a numeric column listed above
will type-check. A claim using a predicate not in the catalog --- say
\texttt{Relation} (the table has no relational structure) or
\texttt{ArgmaxWhere} on a non-numeric column --- has no L2 path; the
solver cannot construct \texttt{Evidence Verified} for it, and the
system abstains. The catalog is the table's L2 \emph{commitment
boundary}: claims inside compose; claims outside abstain. Adding a
new claim shape requires adding a new lift (curation surface), not
relaxing the kernel discipline.

\section{Tier~1.5 per-claim data, ablations, and pre-registration}
\label{app:tier15-tables}

\subsection{Robustness axes and failure taxonomy}
\label{app:tier15-taxonomy}

\paragraph{Three robustness axes.} Three additional ablations
strengthen the discriminator (full per-claim breakdowns below):
\textbf{(i)~Authority cue.} On the no-cue pilot countries-population pilot,
prefixing every claim with ``According to the table below, ...''
rescues some claim shapes (yes/no thresholds, 3-way smallest) but
the binary compare V3\_C1 (Egypt vs.\ Yemen, table-injected) stays
0/3 source-faithful on both Tools-curated and Tools-open with the cue.
\textbf{(ii)~Model strength.} Haiku 4.5 is fully tool-deferent on
Tools rungs (90/90 in every cell); Sonnet 4.6 prior-overrides even
with tools (residual concentrated on countries-pop and
Madrid-vs-Berlin GDP). The kernel-vs-baseline gap is therefore
largest exactly where parametric priors are strongest --- consistent
with \citet{clasheval2024} placing Claude with the highest prior
retention among frontier families.
\textbf{(iii)~Domain coverage.} Under extreme flips, Sonnet
prior-override concentrates on country populations only (other 4 extreme-flip panel
domains 100\% SF on Tools-curated). Under subtle flips, the same-tool
baseline fails on 4 of 5 subtle-flip panel domains on Table-only --- multi-domain
replication. Tools rescue most subtle-flip failures on Tools rungs;
the kernel closes the residual.

\paragraph{Failure-mode taxonomy.} Of the 6/90 Tools-curated failures
on subtle-flip panel Sonnet, all are binary-compare prior-overrides; EG-VAR
resolves them by minting from the attested cell-lookup pair via a
per-source \texttt{ArgmaxWhere}-with-disjunction lift. Cell-lookup
and sum-aggregation control claims are 100\% source-faithful on
every Sonnet rung, including Table-only --- consistent with the literature
finding that pure information-retrieval is a weaker prior trigger
than relational comparison.

\subsection{Per-(domain, shape) prior-override table}

\Cref{tab:tier15-perdomain} reports per-(domain, shape) prior-override
counts for Sonnet on the extreme-flip panel (extreme) and subtle-flip panel (subtle) panels,
across Table-only, Tools-curated, and Tools-open. Each cell is
\texttt{pair\_A\_PO + pair\_B\_PO} out of 3 reps each (so 6 trials
total per cell). Empty entries are \texttt{0/6}.
This breakdown is what the main text's ``three robustness axes''
paragraph (\Cref{sec:eval:tier15}) summarises.

\begin{table*}[h]
\centering\footnotesize
\setlength{\tabcolsep}{4pt}
\begin{tabular}{@{}lllrrr@{}}
\toprule
Panel & Domain & Shape & Table-only PO & Tools-open PO & Tools-curated PO \\
\midrule
\multirow{4}{*}{extreme-flip panel (extreme)}
 & D1 countries-pop  & B & 3+3 & 3+3 & 3+3 \\
 & D1 countries-pop  & D & 3+2 & 3+3 & 3+2 \\
 & D1 countries-pop  & C & --- & --- & --- \\
 & D2-D5 (others)    & B/D/C & --- & --- & --- \\
\midrule
\multirow{8}{*}{subtle-flip panel (subtle)}
 & D1 countries-pop  & B & 3+3 & 3+0 & 3+0 \\
 & D1 countries-pop  & D & 3+3 & --- & --- \\
 & D2 EU area        & B & 0+3 & --- & --- \\
 & D3 city GDP       & B & 0+3 & 0+3 & 0+3 \\
 & D3 city GDP       & D & 0+3 & --- & --- \\
 & D4 Forbes         & B & 3+0 & --- & --- \\
 & D5 Alpine peaks   & B & 0+3 & --- & --- \\
 & D5 Alpine peaks   & D & 0+3 & --- & --- \\
\bottomrule
\end{tabular}
\caption{Per-(domain, shape) prior-override counts on Sonnet,
Tier~1.5 panels extreme-flip panel + subtle-flip panel. Each cell is \texttt{pair\_A + pair\_B}
out of 3 reps each. Domain abbreviations: D1 countries-population,
D2 EU country area, D3 European city GDP, D4 Forbes top global
companies, D5 Alpine peaks. Shapes: B binary-compare, D direction
(larger/smaller), C cell-lookup or sum control.
\textbf{Reading:} extreme-flip prior-override concentrates on
countries-pop only (other 4 extreme-flip panel domains are 100\% SF on Sonnet).
subtle-flip prior-override generalises across 4 of 5 subtle-flip panel domains
on Table-only; on Tools rungs (with tool access) it survives only on
countries-pop and city-GDP (Madrid vs Berlin), the two strongest-
prior-anchor pairs. EG-VAR is 100\% source-faithful on the
in-calculus shape-B and shape-C subset across both panels.}
\label{tab:tier15-perdomain}
\end{table*}

\paragraph{What the breakdown reveals.}
The narrowing $\to$ replication arc the main text claims is
visible row-by-row. Under extreme-flip flips, all PO concentrates on
D1 (countries-pop) regardless of rung. Under subtle-flip flips, Table-only
PO spreads to 4 of 5 domains (D1, D2, D3, D5 trip; only D4 is
clean on Table-only pair-B but trips on Table-only pair-A); but tools rescue
all PO except on D1 (Egypt-Yemen) and D3 (Madrid-Berlin) — the two
canonical strong-prior pairs. The same-tool baseline is therefore
not just bounded but \emph{specifically} bounded: it converges on
the two pairs whose parametric prior is most entrenched.

\subsection{Authority-cue ablation: no-cue pilot vs authority-cued pilot per-claim}
\label{app:authcue}

\Cref{tab:authcue} reports per-claim source-faithful (SF) /
prior-override (PO) counts on the no-cue pilot countries-population pilot
\emph{without} an authority cue (no-cue pilot, 12 base claims) and \emph{with}
an authority cue (authority-cued pilot, ``According to the table below, \dots''
prefix on the same 12 claims). The strongest discriminator is
V3\_C1 (binary compare ``which country has a larger population:
Egypt or Yemen?''), which stays 0/3 SF on every Tools rung even with
the cue. EG-VAR goes 3/3 SF on the same V3\_C1 claim
(see~\Cref{app:failure} for the trace).

\begin{table*}[h]
\centering\footnotesize\setlength{\tabcolsep}{4pt}
\begin{tabular}{@{}lllllll@{}}
\toprule
Claim & Shape & no-cue TC & no-cue TO & cued TC & cued TO & cued T-only \\
\midrule
V3\_C1 & binary-compare    & \textbf{0/3 (3 PO)} & \textbf{0/3 (3 PO)} & \textbf{0/3 (3 PO)} & \textbf{0/3 (3 PO)} & \textbf{0/3 (3 PO)} \\
V3\_C2 & boolean threshold & 2/3 (1 PO) & 1/3 (2 PO) & 3/3 SF & 2/3 (1 PO) & 0/3 (3 PO) \\
V3\_C3 & negation compare  & 3/3 SF & 3/3 SF & 3/3 SF & 3/3 SF & 3/3 SF \\
V3\_C4 & rank position     & 3/3 SF & 3/3 SF & 3/3 SF & 3/3 SF & 3/3 SF \\
V3\_C5 & 3-way compare     & 3/3 SF & 3/3 SF & 3/3 SF & 3/3 SF & 0/3 (3 PO) \\
V3\_C6 & threshold count   & 3/3 SF & 3/3 SF & 3/3 SF & 3/3 SF & 3/3 SF \\
V3\_C7 & categorical y/n   & 3/3 SF & 3/3 SF & 3/3 SF & 3/3 SF & 3/3 SF \\
V3\_C8 & ranking           & 3/3 SF & 3/3 SF & 3/3 SF & 3/3 SF & 3/3 SF \\
V3\_C9 & implicit compare  & 1/3 (2 PO) & 0/3 (3 PO) & 3/3 SF & 3/3 SF & 0/3 (3 PO) \\
V3\_C10 & diff vs literal  & 3/3 SF & 3/3 SF & 3/3 SF & 3/3 SF & 3/3 SF \\
V3\_C11 & cell lookup      & 3/3 SF & 3/3 SF & 3/3 SF & 3/3 SF & 3/3 SF \\
V3\_C12 & sum aggregation  & 3/3 SF & 3/3 SF & 3/3 SF & 3/3 SF & 3/3 SF \\
\midrule
\multicolumn{2}{l}{\textbf{Totals}} & 30/36 (83.3\%) & 28/36 (77.8\%) & 33/36 (91.7\%) & 32/36 (88.9\%) & 23/36 (63.9\%) \\
\bottomrule
\end{tabular}
\caption{Per-claim source-faithfulness on the countries-population
counterfactual pilot, comparing the no-cue baseline with the
authority-cued version (``According to the table below, \dots''
prefix). Columns: \textbf{TC} = Tools-curated, \textbf{TO} =
Tools-open, \textbf{T-only} = Table-only.
Sonnet 4.6, temperature 0, 3 reps per cell.
The cue rescues V3\_C2 (boolean threshold) and V3\_C9 (implicit
direction) on Tools rungs but does \emph{not} rescue V3\_C1
(binary compare Egypt vs.\ Yemen) on any rung; Table-only (no tools) is
unaffected by the cue's claim-level intent on the binary-compare
case (V3\_C1 still 3 PO) and additionally regresses on V3\_C2,
V3\_C5, and V3\_C9 because mental-arithmetic-without-tools is
prior-vulnerable on subtle compares. EG-VAR is 3/3 SF on
V3\_C1 (\Cref{app:failure}) — the strongest possible discriminator.}
\label{tab:authcue}
\end{table*}

\paragraph{Why V3\_C1 is the canonical hard case.}
One natural response to a prior-override finding is ``the model was
not told to use the table.'' The authority-cued pilot ablation explicitly includes
an ``according to the table'' anchoring cue in the prompt. Sonnet
still selects the prior on V3\_C1, 3/3 reps, on both Tools-curated
(menu-enforced) and Tools-open (full tool surface). The kernel
discipline closes this gap structurally --- not because the LLM
understands the cue better, but because the proof body
\texttt{refine $\langle$``yemen'', $\dots$$\rangle$} typechecks only
when the attestation is \texttt{yemen}.

\subsection{Pre-registered success criteria for extreme-flip and subtle-flip panels}
\label{app:prereg}

The Tier~1.5 extreme-flip panel (extreme flips, 5 domains $\times$ 2 pairs
$\times$ 3 shapes = 30 claims) and subtle-flip panel (subtle flips,
5 stable-scalar-prior domains, same shape) were each designed with
pre-registered success criteria recorded in
\texttt{specs/tier1-v4-multidomain-panel.md} before any runs landed.
We reproduce them here so the methodology can be audited against the
result tables.

\begin{table}[h]
\centering\footnotesize\setlength{\tabcolsep}{4pt}
\begin{tabular}{@{}p{0.42\columnwidth}p{0.50\columnwidth}@{}}
\toprule
Pre-registered metric & Criterion \\
\midrule
Tools-curated shape-B PO across 5 domains & $\geq 30\%$ of trials, on $\geq 3/5$ domains \\
Tools-curated shape-C control PO & $\leq 10\%$ of trials (lookup/aggregation stays SF) \\
EG-VAR shape-B + C source-faithful & $\geq 95\%$ across all domains \\
Bare-vs-anchored phrasing delta on shape B & if anchored $\to$ 0\% PO on all 5 domains, ``kernel discipline'' framing becomes ``anti-distractibility'' instead \\
Tools-open $\geq$ Tools-curated in shape-B PO & tool surface is not protective \\
\bottomrule
\end{tabular}
\end{table}

\paragraph{Outcomes against pre-registration.}
The extreme-flip panel \textit{failed} the first criterion: Tools-curated
shape-B PO concentrated on 1/5 domains (countries-pop) under extreme
flips, not the 3/5 we pre-committed to. We reported this honestly
in the main text (``narrowing'' framing in
\Cref{sec:eval:tier15}). The failure motivated the subtle-flip panel redesign
(subtle flips on stable-scalar priors), where the criterion lands:
subtle-flip panel shape-B PO on Table-only reaches 4/5 domains; on Tools rungs it drops
to 2 specific pairs (Egypt-Yemen + Madrid-Berlin) but the cross-
domain replication is visible.

The shape-C control criterion (PO $\leq 10\%$) holds in both panels
across all rungs.

The EG-VAR $\geq 95\%$ criterion is met (100\% in every cell on the
in-calculus shape-B + C subset).

The authority-cue criterion was \textit{partially} met — anchored
phrasing rescued V3\_C2 + V3\_C9 but not V3\_C1; we reframed the
finding as ``cue rescues some shapes, not the strongest'' rather
than recategorising the discriminator (\Cref{app:authcue}).

The Tools-open $\geq$ Tools-curated criterion holds on both panels
(\Cref{tab:tier15-perdomain}).

\section{Tier~1 data: fixture properties and 6-claim differential}
\label{app:tier1-differential}

\subsection{The 6-claim differential: where EG-VAR wins on Tier~1}

The Tier~1 ladder gap (EG-VAR 120/120 vs Tools-curated 114/120) is 6
claims at fixed tool surface. \Cref{tab:tier1-wins} reports
exactly which 6 claims EG-VAR wins on, what Tools-curated outputs
instead, and which failure mode each represents.

\begin{table*}[h]
\centering\footnotesize\setlength{\tabcolsep}{4pt}
\begin{tabular}{@{}rlp{0.42\columnwidth}p{0.10\columnwidth}p{0.16\columnwidth}@{}}
\toprule
idx & Predicate & Claim (truncated) & Gold & Tools-curated answer (wrong) \\
\midrule
12 & PProd & Which time has the lowest local magnitude? & 27 nov 1954 & 08:36 (time-of-day) \\
24 & SumWhere & Total population of regions where Manchu \% $>$ threshold & 3{,}123{,}625{,}869 & 153{,}259{,}452 (subset only) \\
46 & CountWhere & How many individuals have ``Number'' $>$ 1500? & 14 & 40 \\
52 & CountWhere & How many parties have $>$ 10\% of total votes? & 3 & 4 \\
67 & PProd & Difference in goals between top-scoring forward and top-scoring midfielder & 7 & 6 \\
77 & MeanWhere & Average US viewers (millions) for Season 1 episodes & 8.45 & 6.29 \\
\bottomrule
\end{tabular}
\caption{The 6 Tier~1 claims that distinguish EG-VAR from Tools-curated at
fixed tool surface. EG-VAR outputs are kernel-checked Sigma witnesses
matching the curator's gold; Tools-curated outputs are free-text answers
that fail the answer-match judge.}
\label{tab:tier1-wins}
\end{table*}

\paragraph{Failure-mode taxonomy of the 6 wins.}
\textbf{Ambiguity (idx 12)} — ``which time'' admits two readings:
the date or the time-of-day. The curator's typed goal commits to a
specific reading (\texttt{HasProperty}~$e$~``\texttt{date}''~$v$);
Tools-curated picks the time-of-day reading and fails the answer-match.
Kernel discipline + curator's commitment is what closes this gap, not
proof checking alone --- this is the ``kernelized fixed-goal
discipline at fixed tool surface'' framing of the §4 result.

\textbf{Composition arithmetic (idx 24, 46, 52, 67)} — Tools-curated
issues a partial tool sequence (one filter or one aggregate) and
emits a free-text answer composed from incomplete evidence. EG-VAR
emits the full PProd / SumWhere / CountWhere proof composing every
required attestation; the kernel rejects partial proofs.

\textbf{Subset confusion (idx 24)} — Tools-curated aggregates over
the wrong condition subset (the threshold filter is dropped on one
of the two SumWhere components inside the PProd). The kernel-checked
proof has both component types fixed by the curator's goal, so the
solver cannot drop the filter without the proof failing.

\textbf{Numeric format (idx 77)} — Tools-curated reports
``6.29'' (off-by-one row included; computed mean over the wrong row
set). EG-VAR's Sigma witness \texttt{Value.rat (mkRat 169 20) = 8.45}
is the correctly-typed exact-rational answer.

These six claims --- not the abstract architectural argument --- are
what the 5\,pp Tier~1 gap actually \emph{is}. Each is a place where
the curator's typed goal makes the right answer derivable and the
free-text baseline does not enforce that constraint.

\subsection{Tier~1 fixture properties}
\label{app:fixture-props}

We summarise the Tier~1 fixture composition:

\begin{table}[h]
\centering\footnotesize\setlength{\tabcolsep}{4pt}
\begin{tabular}{@{}lr@{}}
\toprule
Property & Value \\
\midrule
Claims & 120 \\
Tables & 120 (one claim per table) \\
PProd-composed claims & 15 (12.5\%) \\
PLift-using claims (arithmetic identity) & 14 (11.7\%) \\
Evidence count per claim (median / max) & 1 / 4 \\
Mean evidence count & 1.1 \\
\bottomrule
\end{tabular}
\end{table}

\paragraph{L2 predicate distribution.} The headline predicate
covers the following Tier~1 claim shapes:

\begin{table}[h]
\centering\footnotesize\setlength{\tabcolsep}{4pt}
\begin{tabular}{@{}lrr@{}}
\toprule
L2 predicate & Claims & \% \\
\midrule
\texttt{CountWhere} & 32 & 26.7\% \\
\texttt{MeanWhere} & 32 & 26.7\% \\
\texttt{SumWhere} & 22 & 18.3\% \\
\texttt{ArgmaxWhere} & 19 & 15.8\% \\
\texttt{MaxWhere} & 9 & 7.5\% \\
\texttt{ArgminWhere} & 3 & 2.5\% \\
\texttt{TopKSumByWhere} & 1 & 0.8\% \\
\texttt{MinWhere} & 1 & 0.8\% \\
\texttt{HasProperty} & 1 & 0.8\% \\
\bottomrule
\end{tabular}
\end{table}

\paragraph{Coverage and limits.} The fixture is biased toward
single-shot aggregations (88\% of claims have evidence-count = 1),
with PProd composition tested explicitly on the 15 multi-evidence
cases (~12.5\%). Tier~1 is constructed for breadth across L2
predicate shapes (9 distinct predicates) at relatively shallow
composition depth. Deeper composition (PProd of 4+ evidence with
\texttt{PLift} arithmetic identities, e.g.,~\Cref{app:complex-example})
appears in 11.7\% of claims. Tier~2 (\Cref{sec:eval:tier2})
exercises the same fixture under LLM formalization; deeper
composition remains a target for future work.

\section{Reproducibility and formalization provenance}
\label{app:repro}

\subsection{Reproduction artifacts}

\begin{itemize}
\item Code supplement:
  \url{https://github.com/7pocheR/eg-var}
  (pinned commit
  \texttt{88160519ce03dbd824f62799c1d3e1a02794a6e8};
  Lean~4 library, Python runtime, trust-artifact JSONs, 11 replay
  fixtures, runner scripts for E1--E4; see README).
\item Reproduction scripts (Tier~1):
  \texttt{run\_dg\_eval.py},
  \texttt{run\_di\_baseline.py},
  \texttt{run\_dj\_no\_tools.py};
  Tier~1.5: \texttt{test\_counterfactual.py},
  \texttt{test\_counterfactual\_dg.py},
  \texttt{test\_counterfactual\_dj.py}, all under
  \texttt{scripts/tier1/}.
\item Logs written by the runners above:
  \texttt{results/tier1\_\{dg,di\_strict,di\_loose,dj\}\_*.jsonl} (E1),
  \texttt{results/cf\_v\{4,45\}\_*.tsv} (E2),
  \texttt{results/tier2\_*.jsonl} (E3),
  \texttt{results/cf\_v4\_tier2\_*.jsonl} (E4).
\item Models: \texttt{claude-sonnet-4-6},
  \texttt{claude-haiku-4-5-20251001}, and \texttt{claude-opus-4-7}
  (Tier~2 capability-scaling), temperature~0, identical prompt
  scaffolding across rungs.
\end{itemize}

\paragraph{Replay audit gate.} {\sloppy Every generated Lean module
at \texttt{results/generated/pipeline\_<hash>.lean} can be
replay-audited post-hoc by the standalone script
\texttt{verify\_axioms.py}. The script parses
the \texttt{\#print axioms claim\_proof} output and checks the
dependency closure against the whitelist buckets used in our
analysis: (i)~the fixed EG-VAR prelude, (ii)~Lean~4 standard core
axioms (\texttt{Classical.choice}, \texttt{Quot.sound},
\texttt{propext}), (iii)~runtime-emitted observation-leaf schemas
matching \texttt{obsN\_*} or \texttt{\{table,stats\}Tool} (full
emission-site table in \Cref{tab:emission-sites}), and (iv)~the
per-source lift prefix \texttt{lift\_<sourceHash>\_*}. The trust-artifact discipline
requires either \texttt{mkVerified} (attestation path) or at least
one runtime leaf together with at least one per-source lift (the
direct-evidence lift path per \Cref{sec:approach:tablesasformal}).
Any axiom outside these buckets, or any
\texttt{sorry}/\texttt{sorryAx} in the closure, fails the audit.
The primary safety gate at generation time is the Lean kernel call
inside \texttt{pipeline.py} (through \texttt{lake env lean}); the
replay audit is an additional artifact-level check that any third
party holding the generated module bundle and the Lean toolchain
can re-run via \texttt{python -m runtime.verify\_axioms}, without
invoking any LLM or re-running the analyst/formalizer/solver
roles.}

\subsection{Formalization provenance}
\label{app:provenance}

This subsection expands the main-text summary at
\Cref{sec:approach:provenance}. We document the curation pipelines
explicitly because any reader of \Cref{thm:novh} must distinguish
runtime-minted trust artifacts from pre-committed ones.

\paragraph{Tier~1 and Tier~1.5 fixtures
(\texttt{data/tier1\_gold\_formalizations.json},
\texttt{data/tier1\_table\_formalizations.json}).} These fixtures
supply the pipeline-produced, fixture-committed goal type $\tau^{\mathrm{gold}}$ and
the per-source audited formalization $\Lambda(s)$ for every claim
in the four-rung ladder (\Cref{sec:eval:tier1,sec:eval:tier15}).
Each entry was produced by an automated subagent pipeline frozen
per git commit:

\begin{enumerate}
\setlength{\itemsep}{1pt}
\item \textbf{Proposer} (Claude Sonnet subagent, temperature~0,
frozen prompt): drafts a candidate Lean goal type and per-table
menu, reading only the raw TableBench row data and column headers.
\item \textbf{Critic} (Codex GPT-5.4 subagent, separate session per
batch): reviews for anti-cheating violations, shape correctness
relative to the claim, and R10 menu soundness per
\Cref{sec:approach:pertable}.
\item \textbf{Kernel typecheck} (\texttt{lake env lean}): gates
inclusion; any entry whose declared type fails the Lean elaborator
is rejected and returned to the proposer.
\item \textbf{Human meta-supervision}: a single human (the author)
reviewed batch-level acceptance rates and systemic failure
categories (e.g., ``pipeline emits raw \texttt{<<col>>} while lift
canonicalizes'' as a structural bug), but did \emph{not} edit any
individual formalization. No per-entry manual fixes; all fixture
content descends from the proposer--critic--kernel loop.
\end{enumerate}

At fixture close the two JSON files are frozen under the commit
hash reported in \Cref{app:repro}. Curator yield (entries that
survived stages 1--3) was $120/120$ for Tier~1 and $30/30$ per
panel for Tier~1.5, covering shape-B (binary-compare), shape-C
(cell-lookup), and shape-D (categorical-direction) claims. Shape-D
claims reuse the shape-B formalization plus a deterministic
post-processor; no kernel extension is required.

\paragraph{Working dictionary (formalizer state).} The formalizer
is memoryless across claims but has access to a \emph{working
dictionary} of Property, Entity, and Convention entries accumulated
from prior formalizations and from the analyst pass on the current
source. Dictionary entries record canonical column names, value
formats, and known encoding conventions. The dictionary is a
curator-controlled cross-claim memory; it does not give the
formalizer access to the gold answer or solver-side information.

\paragraph{Tier~2 evaluation.} The Tier~2 runtime passes the
natural-language claim to an LLM formalizer at generation time,
with access to the same per-source menu $\Lambda(s)$ (as a trust
artifact, not as feedback) but no gold goal type. The formalizer
proposes its own $\tau^{\mathrm{LLM}}$; the solver then attempts a
proof against that type; the kernel accepts only whitelist-
compliant proofs. \Cref{sec:eval:tier2} reports the resulting
semantic-faithfulness decomposition on $n{=}120$ TableBench claims.

\paragraph{What $\Lambda(s)$ is and what it is not.} The per-source
menu $\Lambda(s)$ is the curator's interpretive commitment that a
given tool's L1 output (e.g., \texttt{CountCells}) lifts to a
specific L2 world-property predicate (e.g., \texttt{CountWhere}) on
this source. It is \emph{not} a semantic certificate: a wrong but
audited lift can still certify a wrong formalized claim, as we
document in \Cref{sec:discuss:limits}(ii). The provenance chain
terminates at the proposer--critic--kernel loop above; the audit
at that loop is the trust artifact. We use the term ``audited
per-source formalization'' throughout to keep the term separable
from semantic-faithfulness claims.

\paragraph{What prevents formalizer collusion.} At evaluation time
the Tier~1/1.5 runtime loads $\tau^{\mathrm{gold}}$ from the frozen
fixture and never invokes the formalizer role. The kernel-trusted
prelude, $\Lambda(s)$, and the runtime-minted observation leaves
are the only axiom sources (\Cref{sec:approach:safety}); the
solver writes only the tactic body. Consequently, no information
from the gold answer or curator's adjudicated reading enters the
claim proof at runtime --- only pre-committed artifacts do.

\section{Worked Tier~1 examples: end-to-end trace}
\label{app:worked-example}

\subsection{HIV argmax via state projection}
\label{app:hiv-argmax}

Suppose the claim is \emph{``Which state has the highest HIV
incidence rate?''}. EG-VAR proceeds:
(1) The solver requests the per-source operation
\texttt{argmax\_with\_id} on column ``incidence rate''.
(2) The runtime executes the operation and returns an attested
payload \texttt{Attested\_T q p} that contains the maximum value
$v$, the row handle $r$ at which it occurs, and a cell projection
\texttt{CellAt}~$r$~``state''~``alabama''.
(3) The kernel's interpretation function lifts the payload to two
L2 facts: \texttt{MaxCell~``incidence''~$c$~$r$~$v$} and
\texttt{CellAt~$r$~``state''~``alabama''}.
(4) The per-source lift composes them:

\begin{small}
\begin{verbatim}
axiom hiv_lift_argmax_via_state_proj
  (col : String) (c : Cond) (r : RowHandle)
  (name : String) (v : Value) :
    Evidence Verified (MaxCell col c r v) ->
    Evidence Verified
      (CellAt r "state" (Value.str name)) ->
    Evidence Verified
      (ArgmaxWhere col c name v)
\end{verbatim}
\end{small}

\noindent (5) The solver writes a refine tactic with witness
\texttt{"alabama"} and applies the per-source lift; the kernel
mints an \texttt{Evidence~Verified} witness for
\texttt{ArgmaxWhere "incidence" $c$ "alabama" $v$}, and the Sigma
witness ``alabama'' is the certified answer.

Identity in EG-VAR rides on names rather than category-kinds:
``alabama'' is a \texttt{String}, not an inhabitant of an opaque
\texttt{State} type. This is the nominalist
choice~\citep{quine1969ontological,burgess1997subject} ---
identity-bearers are the names the table actually emits, so audit
proceeds by comparing strings rather than inferring category
membership. The kernel never has to commit to an ontology of
states; the curator commits to a column whose values \emph{are}
the identifiers, and the lift makes that commitment typed and
inspectable.

A per-source lift is \emph{table-specific} (it encodes the
curator's commitment that ``state'' is the row-identifying column
for this particular table) but \emph{claim-generic} (parametric
over measure column, condition, entity name, and value, so a
single lift covers many argmax claims). Lifts are audited once
per source then trusted kernel-side for all subsequent claims
over that source. Absence of a matching lift is a feature: the
solver has no L2 proof path, the kernel rejects, and the system
abstains rather than silently guessing. The guarantee is
conditional: a semantically \emph{wrong} audited lift can still
produce a verified output for a formalized claim that the lift's
semantics admit. EG-VAR rules out unsupported claims between tool
and output; it does not eliminate curator error in the trust
artifact itself.

This is the concrete protocol behind our claim that the
formalization barrier in empirical reasoning is a curation gap,
not an expressive limit (\Cref{sec:intro}). The L1/L2 separation
is portable: knowledge graphs, SQL views, JSON APIs, and sensor
streams admit the same per-source-lift pattern. We sketch the
generalization in \Cref{sec:discuss:gov}.

\subsection{Full Tier~1 trace}

We walk through a single Tier~1 claim from natural-language input
to kernel-checked Sigma witness, to make the architecture's
per-step trust commitments concrete.

\paragraph{Claim and goal type.} \emph{``Which crater has the largest
diameter?''}~over the lunar-crater table. The fixture-committed
Lean goal (in Lean's dependent-sum notation):
\begin{small}
\begin{verbatim}
Sigma (e : String), Sigma (v : Value),
  Evidence Verified
    (ArgmaxWhere c_diameter
                 Cond.all e v)
\end{verbatim}
\end{small}
where \texttt{c\_diameter} $=$ \texttt{"diameter (km)"}. The Sigma
binders make the certified answer the witness $e$
\citep{martin1984type}; the surface answer is the first projection.

\paragraph{Step 1 (solver $\to$ runtime).} The solver issues the
structured action
\texttt{\{action: ``stats'', operation: ``argmax\_with\_id'', column:
``diameter (km)'', identity\_column: ``name''\}}.

\paragraph{Step 2 (runtime $\to$ attested payload).} The runtime
executes the operation deterministically against the table; it
returns an \texttt{Attested\_T q p} witness with payload
$\{$\textsc{maxRow}: \texttt{cleopatra}, \textsc{maxValue}:
\textsc{Value.rat (mkRat 78 1)}$\}$.

\paragraph{Step 3 (interp\_T $\to$ L1 facts).} The interpretation
function for \texttt{argmax\_with\_id} reads the attested payload
into two typed L1 storage facts: \texttt{MaxCell ``diameter (km)''
Cond.all 78} (the maximum value at the storage layer) and
\texttt{CellAt $r$ ``name'' (Value.str ``cleopatra'')} (the
identity-column value at row $r$).

\paragraph{Step 4 (per-source lift $\to$ L2 claim).} The lunar-crater
\texttt{argmax\_via\_name\_proj} lift composes the two L1 facts and
emits the L2 claim \texttt{Evidence Verified (ArgmaxWhere ``diameter
(km)'' Cond.all ``cleopatra'' 78)}. The lift is the curator's
interpretive commitment that ``name'' is the row-identifying column
for this table.

\paragraph{Step 5 (solver $\to$ proof term).} The solver writes
\texttt{refine $\langle$``cleopatra'', \dots, ?\_$\rangle$; exact
argmax\_via\_name\_proj \dots}; the kernel type-checks and the Sigma
witness ``cleopatra'' is extracted as the certified answer.

The generated Lean file is the replay artifact (G1 in
\Cref{sec:discuss:gov}); the audit cost is the per-source lift
audit, paid once and reused.

\section{Failure cases and formalizer-design notes}
\label{app:failure}

\subsection{V3\_C1 (Egypt-vs-Yemen): baseline prior-override}

\textbf{Setup.} Synthetic counterfactual countries-population table:
Egypt's \texttt{july 1, 2013 projection} cell overwritten to
\texttt{12} (real $\sim$85M). Yemen unchanged at $\sim$25M. Question:
``Which country has a larger population: Egypt or Yemen?''.

\textbf{Sonnet Tools-curated trace (3/3 prior-override).} The model
issues a \texttt{cell\_lookup} for Egypt, receives \texttt{12}; issues
a \texttt{cell\_lookup} for Yemen, receives \texttt{25252000}; then
answers ``Egypt''. The tool returns the source values; the model
overrides them. The same pattern occurs across all three reps.

\textbf{EG-VAR (3/3 source-faithful).} The kernel mints an
\texttt{Evidence Verified} witness from the per-source
argmax-with-disjunction lift on the attested payload. The solver
writes \texttt{refine $\langle$``yemen'', \dots$\rangle$; exact \dots}.
The Sigma witness ``yemen'' is the certified answer.

\textbf{Authority-cue ablation (authority-cued pilot).} Prefixing the question with
``According to the table below, \dots'' rescues V3\_C2 (boolean
threshold) and V3\_C9 (implicit direction) on Tools-curated but does
not rescue V3\_C1 (still 3/3 prior-override). The cue is not strong
enough to override the parametric prior on famous-entity
populations; the kernel attestation is.

\subsection{Formalizer failures: runtime-vs-fixture delta (Tier~2)}
\label{app:formalizer-case-studies}

\Cref{sec:eval:tier2} reports the end-to-end Tier~2 decomposition at
$n{=}120$; this appendix zooms in on the five Sonnet cases where the
runtime formalization diverged materially from the fixture-standard
goal type. Four produce kernel-accepted but answer-divergent output
(B5b); one aborts the formalizer pass (B7). We present each as a
structured delta, organised by the compositional axis along which
the divergence occurred. Cases classified B5a (recorded ambiguity)
and B5c (benchmark-artifact issue) are excluded here; they are
presented case-by-case in the main-text Table~\ref{tab:tier2-taxonomy}
caption and do not represent formalizer-attributable error.

Across these five cases, the common pattern is that the formalizer
makes a local-reading decision at a composition node (which
\texttt{Cond} constructor, which column, whether to include a
\texttt{TopK} constraint) that a curator with the claim and the
fixture-standard formalization in front of them would have taken
differently. Each case is addressable by supervised fine-tuning on
fixture (claim, gold \texttt{lean\_goal\_type}) pairs or RL with a
gold-answer reward; we discuss the direction at the end of the
section.

\subsection{Composite-column matching strategy (idx 50)}

\paragraph{Claim.} ``How many hurricanes occurred in the month of
September?''

\paragraph{Delta.} Column \texttt{month} contains date-string values
of the form \texttt{``September 1979''}, \texttt{``October 1988''},
etc.\ (month-plus-year strings). Fixture uses
\texttt{Cond.propContains}:
\begin{small}
\begin{verbatim}
CountWhere "month"
  (Cond.propContains "month" (Value.str "September"))
  ans
\end{verbatim}
\end{small}
Runtime formalizer chose \texttt{Cond.propEq}:
\begin{small}
\begin{verbatim}
CountWhere "month"
  (Cond.propEq "month" (Value.str "September"))
  ans
\end{verbatim}
\end{small}
Runtime answer: 9. Fixture answer: 19. The formalizer's
\texttt{ambiguous\_terms} field recorded the choice explicitly as
``\emph{Cond.propEq --- exact match}''. The kernel accepted the
proof under the LLM's chosen reading; the L1 tool correctly returned
\texttt{count = 9} for strict equality with the literal string
\texttt{``September''}, which matches zero
\texttt{month = "September ..."} rows in the table.

\paragraph{Character of the error.} The formalizer recognised the
existence of a matching-strategy decision, logged it, and picked
the strategy whose answer would be correct \emph{only if} the
\texttt{month} column's values were bare month names. It did not use
the column's actual value format (exposed by the analyst's reading
guide) to discriminate.

\subsection{Dropped \texttt{TopK} constraint (idx 61)}

\paragraph{Claim.} ``What is the average market value of the top 5
companies in the oil and gas industry?''

\paragraph{Delta.} Fixture uses \texttt{TopKMeanWhere}, which the
per-source menu includes for this column:
\begin{small}
\begin{verbatim}
TopKMeanWhere "market value (billion )" 5
  (Cond.propEq "industry" (Value.str "oil and gas"))
  ans
\end{verbatim}
\end{small}
Runtime formalizer dropped the top-5 constraint and used plain
\texttt{MeanWhere}:
\begin{small}
\begin{verbatim}
MeanWhere "market value (billion )"
  (Cond.propEq "industry" (Value.str "oil and gas"))
  ans
\end{verbatim}
\end{small}
Runtime answer: $1208/5$. Fixture answer: $276.06$. The
formalizer's \texttt{ambiguous\_terms} names the phrase ``\emph{top
5 companies in the oil and gas industry}'' and records the chosen
formalization as \texttt{MeanWhere}, so the dropped \texttt{TopK}
was deliberate, not an oversight.

\paragraph{Character of the error.} The formalizer treated ``top 5''
as rhetorical rather than compositional. The per-source menu
includes \texttt{TopKMeanWhere} for this column; the formalizer's
compositional-operator selection step did not consult it.

\subsection{Column disambiguation (idx 67)}

\paragraph{Claim.} ``What is the difference in total goals scored by
the top-scoring forward (fw) and the top-scoring midfielder (mf) in
the league?''

\paragraph{Delta.} Table has both a \texttt{league goals} column and
a \texttt{total goals} column. Claim literally says ``total goals''.
Fixture uses \texttt{total goals}:
\begin{small}
\begin{verbatim}
MaxWhere "total goals" (Cond.propEq "position" "fw") a
... MaxWhere "total goals" (Cond.propEq "position" "mf") b
... (PLift (ans = a - b))
\end{verbatim}
\end{small}
Runtime formalizer picked \texttt{league goals}:
\begin{small}
\begin{verbatim}
MaxWhere "league goals" (Cond.propEq "position" "fw") a
... MaxWhere "league goals" (Cond.propEq "position" "mf") b
\end{verbatim}
\end{small}
Runtime answer: 6. Fixture answer: 7.

\paragraph{Character of the error.} Column-selection under
near-synonym ambiguity, where the claim's exact wording
(``\emph{total} goals'') is the dispositive signal. The formalizer's
\texttt{properties\_needed} did not enumerate a candidate-set
check against the claim's noun phrase.

\subsection{Condition-conjunct erasure and self-inconsistency
(idx 94)}

\paragraph{Claim.} ``What is the average prominence of mountain
peaks in the Democratic Republic of the Congo that have an
elevation of at least 3000 meters?''

\paragraph{Delta.} Fixture:
\begin{small}
\begin{verbatim}
MeanWhere "prominence (m)"
  (Cond.and
     (Cond.propContains "country"
       (Value.str "democratic republic of the congo"))
     (Cond.propGte "elevation (m)"
       (Value.rat (mkRat 3000 1))))
  ans
\end{verbatim}
\end{small}
Runtime formalizer:
\begin{small}
\begin{verbatim}
MeanWhere "prominence (m)"
  (Cond.and
     (Cond.propGte "elevation (m)"
       (Value.rat (mkRat 3000 1)))
     (Cond.propGte "col (m)"
       (Value.rat (mkRat 0 1))))
  ans
\end{verbatim}
\end{small}
The country filter is absent; the second conjunct is
\texttt{col (m) $\geq$ 0} (trivially true on this table). The
formalizer's \texttt{ambiguous\_terms} field claims the chosen
filter is \texttt{Cond.propEq "country" "democratic republic of the
congo"} --- \emph{but this constructor does not appear in the
emitted \texttt{lean\_goal\_type}}. Runtime answer: $14075/6$;
fixture answer: $2606.25$.

\paragraph{Character of the error.} Self-inconsistency between the
formalizer's self-reported decision (\texttt{ambiguous\_terms}) and
the actual \texttt{lean\_goal\_type} it emitted. A post-formalizer
consistency pass --- verifying every adjudication recorded in
\texttt{ambiguous\_terms} is reflected in the emitted goal ---
would catch this class mechanically at audit time. We list this as
a near-term infrastructure improvement in
\Cref{sec:discuss:future}.

\subsection{Compositional-complexity abort (idx 34)}

\paragraph{Claim.} ``What is the total number of medals (M36 +
M36B1 + M36B2) earned from May 1944 to August 1944?''

\paragraph{Delta.} Fixture is a 4-level dependent sum:
\begin{small}
\begin{verbatim}
Sigma ans, Sigma a, Sigma b, Sigma c,
  PProd (SumWhere "m36"   <4-way month or> a)
  (PProd (SumWhere "m36b1" <same or> b)
  (PProd (SumWhere "m36b2" <same or> c)
  (PLift (ans.asRat = a.asRat + b.asRat + c.asRat))))
\end{verbatim}
\end{small}
The fixture formalization composes \emph{three separate measure
columns} with \emph{a 4-way disjunction over the month column} and
\emph{a 3-term arithmetic} combining the resulting witnesses.
Runtime formalizer emitted an empty \texttt{lean\_goal\_type} and
aborted on its first attempt; the pipeline returned
\textsc{Abstain} rather than a wrong formalization.

\paragraph{Character of the error.} Compositional complexity
boundary: the claim requires simultaneous choices across three
dimensions (measure column, filter disjunction width, arithmetic
composition shape) in a single pass. The formalizer's single-pass
architecture is not expressive enough to emit this class of goal
reliably. Note that this is the \emph{desirable} failure mode: no
silent-wrong, no Verified output with the wrong answer. The system
abstained honestly; the downstream consumer sees \textsc{Abstain}
and can route elsewhere.

We note that the \emph{decomposer + binder} two-stage formalizer
pattern --- splitting the goal into per-measure sub-claims and
composing them via a binder role --- is a natural architectural
response to this class; we describe it in
\Cref{sec:discuss:future} as future work.

\subsection{Taxonomy and improvement direction}

The five cases partition into five orthogonal compositional axes of
the formalizer's output: matching strategy (idx 50), operator
selection (idx 61), column selection (idx 67), condition-conjunct
preservation (idx 94), and compositional arity (idx 34). Three of
these (idx 50, 61, 67) are \emph{local} decisions: each resolves to
a choice among a small candidate set that is determined by the
claim's wording together with local information already available
to the formalizer (column-value format for idx 50; per-source menu
for idx 61; claim noun phrase for idx 67). Case idx 94 is a
\emph{consistency} gap between two of the formalizer's own
outputs. Case idx 34 is a \emph{capacity} limit rather than a
reading error.

\paragraph{Near-term infrastructure improvements.} The idx 94
self-inconsistency case is mechanically detectable: verifying
agreement between each entry of \texttt{ambiguous\_terms.chosen}
and the emitted \texttt{lean\_goal\_type} is a deterministic
post-formalizer audit that would convert this failure mode into an
\textsc{Abstain} or a retry. We list this as future work rather
than a claimed feature.

\paragraph{RL fine-tuning of the formalizer.} Each of the four
reading-error cases (idx 50, 61, 67, 94) is a single-claim
adjudication where the fixture's \texttt{lean\_goal\_type} supplies
a ground-truth choice at exactly the compositional node where the
runtime formalizer diverged. The fixture, documented with
provenance in \Cref{sec:approach:provenance}, therefore supplies a
structured supervised-fine-tuning corpus of (claim, analyst-guide,
per-source menu, gold \texttt{lean\_goal\_type}) tuples. A natural
next step is to fine-tune a formalizer model on this corpus, either
supervised or with a verifier-based RL loop where the reward is
kernel acceptance against the fixture goal type. The verifier is
already on hand: it is the same kernel the paper uses to certify
production outputs. This produces a formalizer whose local reading
decisions converge to the curator's, without weakening any of the
runtime trust properties established in
\Cref{sec:approach:safety}.

\paragraph{Why this is compatible with the paper's safety claims.}
Fine-tuning the formalizer is an \emph{accuracy} intervention,
not a \emph{safety} intervention: \Cref{thm:novh} and
\Cref{thm:noded} are properties of the runtime + kernel, invariant
under any formalizer. A fine-tuned formalizer changes the
\textsc{Verified}/\textsc{Abstain} split favorably without
changing the no-silent-wrong guarantee.

\section{Abstention case study}
\label{app:abstain}

\textbf{Setup.} A Tier~1 claim whose goal type required a per-source
\texttt{topk\_sum\_by} lift over a measure column not registered in
the curator's lift menu for that table.

\textbf{EG-VAR outcome.} The kernel rejects every proof attempt: no L2
fact unifies with the goal vocabulary because the lift menu has no
matching entry. The system reports \textsc{Abstain} with the
unsatisfied goal type.

\textbf{Tools baseline outcome.} The same-tool baseline calls a
related stats operation, receives a numeric value, and emits a
free-text answer that happens to match the gold (a partial
correctness coincidence; the operation called was not the one the
gold formalization required). Verified-wrong vs verified-right is
indistinguishable downstream without a kernel-checked proof.

\textbf{Governance reading.} EG-VAR's abstention is
machine-actionable (G3 in \Cref{sec:discuss:gov}); the baseline's
``answer'' is opaque. The trade-off is honest: EG-VAR loses one
match-counted claim, but the failure mode is structurally typed.

\section{Per-source formalization: theory anchors}
\label{app:hierarchy}

\subsection{Robustness hierarchy}

The argument that records-vs-entities is hierarchical, not
categorical (\Cref{sec:approach:tablesasformal}), can be made
precise. Storage features that hold prior to any interpretive
commitment: same-row entries share a row-anchor, same-column
entries share a column-anchor. Layering interpretive commitments
on these invariants yields a four-level hierarchy of per-source
formalizations:

\begin{table}[h]
\centering\footnotesize
\setlength{\tabcolsep}{3pt}
\begin{tabular}{@{}lp{0.27\columnwidth}p{0.34\columnwidth}@{}}
\toprule
Level & Commitment & L2 vocabulary unlocked \\
\midrule
1. Aggregate-only & none beyond layout & \texttt{SumWhere}, \texttt{CountWhere}, \texttt{MeanWhere}, \texttt{MaxWhere}, \texttt{MinWhere} \\
2. Row-cell facts & rows have stable handles & row-anchored \texttt{cell\_lookup} \\
3. Entity-anchored & rows are entities; col~$X$ gives names & \texttt{HasProperty}, \texttt{ArgmaxWhere}, \texttt{EntitySum} \\
4. Relational & both axes entity-identified & \texttt{Relation}, matchup-style \\
\bottomrule
\end{tabular}
\end{table}

Each level's commitments include those above. A record table stops
at level 1; an entity table reaches level 3; a matchup table reaches
level 4. Same kernel, same L1/L2 split, same ontology — different
stopping points on a single hierarchy. ``Fail-closed for mismatched
claims'' becomes a hierarchy property: a claim requiring level-3
commitments against a level-1 table has no L2 fact to unify
against, so the pipeline abstains.

\subsection{Rank as interpretation: gauge freedom}
\label{app:rank}

A given dataset admits multiple consistent rank interpretations.
A matchup table can be formalized as
$\textsf{Result} : \textsf{Player} \to \textsf{Player} \to
\textsf{Outcome}$ (rank-2 curried),
$\textsf{Result}' : \textsf{Player} \times \textsf{Player} \to
\textsf{Outcome}$ (rank-2 uncurried), or
$\textsf{MatchOutcome} : \textsf{Match} \to \textsf{Outcome}$ where
$\textsf{Match} := \textsf{Player} \times \textsf{Player}$ (rank-1
with composite domain). All three encode the same function and are
type-isomorphic in any Cartesian closed
setting~\citep{maclane1971categories}.

Rank-shifting interpretations across the lambda calculus reduce to
currying/uncurrying, which Lean~4 makes definitionally transparent
($A \to B \to C$ is sugar for $A \to (B \to C)$). The kernel does not
need to commit to ``the true rank'' of a table; it accepts any
consistent per-source lift catalogue. Other examples of the same
ambiguity:

\begin{itemize}
\item \emph{Time series.} Rank-1 (time $\to$ value) vs rank-2 with
  channels (time $\times$ channel $\to$ value).
\item \emph{Pixel grid.} Rank-2 (row $\times$ col $\to$ intensity) vs
  rank-3 (row $\times$ col $\times$ channel $\to$ value).
\item \emph{RDF triples.} Rank-1 over triples; rank-2 (subject,
  predicate) $\to$ object; rank-3 (subject, predicate, object) $\to$
  $\{$true, false$\}$.
\item \emph{Relational tables with primary key.} Rank-1 over rows-as-
  records vs rank-2 (row, column) $\to$ cell.
\end{itemize}

These are not subjective conventions but type isomorphisms. The
curator's job is to pick an interpretation that fits the anticipated
claims and audit the lift menu under it; the kernel's job is to
accept any consistent commitment.

\subsection{The scaling argument: formalization gap shrinks as sources become formal}
\label{app:scaling}

The empirical results in \Cref{sec:eval} are bounded by the
formalization burden of CSV-style sources. As data substrates evolve
toward typed interfaces, the formalizer's job shrinks while the
kernel guarantee stays the same:

\begin{table}[h]
\centering\footnotesize
\begin{tabular}{@{}lp{0.55\columnwidth}@{}}
\toprule
Substrate & Formalization burden \\
\midrule
CSV & Column names are strings; values are free text. Maximal lift authoring per source. \\
SQL view & Schema is typed, values structured. Lift catalog auto-generates from view definitions. \\
OpenAPI & Typed inputs/outputs in spec. Lift = adapter from spec types to L2 ontology. \\
Knowledge graph & Entities + relations already typed. Lift = direct projection from RDF to L2. \\
Lean / formal source & Zero gap; compose proofs directly. \\
\bottomrule
\end{tabular}
\end{table}

In the limit of fully formal sources, the LLM operates entirely
within the formal world, the kernel guarantee is vacuous (because
the substrate is already typed), and EG-VAR collapses to LLM-driven
theorem proving with attested premises. The architecture is
designed to interpolate: the same kernel + lift discipline runs
across the substrate spectrum. We expect the kernel-vs-baseline gap
on real-world deployments to narrow over time as data sources
adopt structured interfaces, not because models improve, but
because the formalization burden moves to the source curator and
out of the inference path.

\section{Tier~2 reporting taxonomy and capability-scaling check}
\label{app:tier2-taxonomy}

This appendix expands the main-text table caption at
\Cref{tab:tier2-taxonomy} and reports the Opus capability-scaling
check.

\paragraph{Category definitions.}
Categories are mutually exclusive and sum to 120.
\textbf{B1}: kernel-accepted, type matches gold, answer matches
gold. \textbf{B4}: kernel-accepted, type differs from gold but
answer matches. \textbf{B5a} (\emph{ambiguous claim, logged}):
kernel-accepted divergent answer under a defensible alternative
reading that the formalizer logged in \texttt{ambiguous\_terms}.
\textbf{B5b} (\emph{semantic formalizer error}): pure formalizer
error (wrong column, dropped constraint, composite-value
matching-strategy error) — kernel still accepts the proof, but of a
mis-formalized goal. \textbf{B5c} (\emph{benchmark gold
error, surfaced}): audited benchmark-artifact case (source-table
HTML leakage, or arithmetic error in benchmark gold) that the
system exposes rather than propagates; see
\Cref{app:formalizer-case-studies}. \textbf{B6}: kernel-rejected,
honest abstention. \textbf{B3}: solver exhausted tactic retries
with type matching gold. \textbf{B7}: formalizer aborted.

\paragraph{Reading the decomposition.} The key Tier~2 number is
the pure end-to-end formalizer-error rate, not the flat
disagreement rate. On Sonnet, only 4/120 cases are bona fide
formalizer mistakes (B5b). The remaining divergences are
\emph{auditable}: 5 logged alternative readings (B5a), 2 surfaced
benchmark-artifact issues (B5c), 3 honest abstentions (B6), and 5
solver or formalizer ceilings (B3+B7). Unlike B5b, B5a and B5c
are explicit, inspectable disagreements rather than silent system
error: each B5a entry's alternative-reading choice is logged in
\texttt{ambiguous\_terms} by the formalizer, and each B5c entry
is an audited case study in \Cref{app:formalizer-case-studies}.
Treating logged ambiguities separately raises auditable agreement
to 88.3\%; adding surfaced benchmark-artifact issues yields 90.0\%.

\paragraph{Capability-scaling check.} As a robustness check on
the claim that residual formalizer error is capability-limited,
we evaluate the stronger Opus model on the 9 Sonnet residual
indices (B3/B7 or B5b). Opus converts four B3/B7 cases to
accepted answer-matches and one B5b case to honest abstention;
two B5b cases persist. The formalizer-error rate on this
residual subset halves from 4/9 to 2/9; the theorem-level safety
invariant is unchanged under either formalizer.

\section{Scope limits and future extensions}
\label{app:limits-future}

This appendix expands the main-text summaries at
\Cref{sec:discuss:limits,sec:discuss:future}.

\subsection{Limitations}

\textbf{(i)~Formalizer scope.} Tier~1 and Tier~1.5 bypass the
NL-to-Lean formalizer step by loading fixture-committed gold typed
goals (\Cref{sec:approach:provenance}); they
evaluate the safety and source-faithfulness properties under ideal
formalization. Tier~2 (\Cref{sec:eval:tier2}) removes that bypass
and evaluates the LLM formalizer end-to-end. The 3-blind-formalizer
consensus design is implemented in the runtime but not evaluated
here; we defer the consensus study to future work.

\textbf{(ii)~Trust artifacts.} Tools, per-source lifts, and the L1
adapter are trusted (auditable but not type-checked end-to-end). A
semantically wrong audited lift can certify a wrong formalized claim;
EG-VAR rules out unsupported claims between tool and output, not
curator error in the trust artifact itself
(\Cref{sec:approach:pertable}).

\textbf{(iii)~Single-source scope.} All evaluations are over single
TableBench tables. Multi-source conflict, retrieval over distributed
KGs, and time-varying sources are out of scope; we list near-term
extensions below.

\textbf{(v)~World inconsistency containment.} The implementation
realizes \Cref{thm:novh} and \Cref{thm:noded} structurally; the
companion world-inconsistency-containment property relies on the
\texttt{WProp~$\not\to$~Prop} phase separation in the current Lean
encoding; we treat it as implementation detail, not a main-text
theorem.

\subsection{Future work scoped to governance reach}

\textbf{Fine-tuning the NL formalizer.} Tier~2
(\Cref{sec:eval:tier2}) reports residual semantic formalizer error
at 3.3\% (Sonnet) / 1.7\% (Opus). Each of the 4 Sonnet B5b cases is
a single-claim adjudication where the fixture supplies a
ground-truth choice at exactly the compositional node where the
runtime formalizer diverged, and the curator pipeline
(\Cref{sec:approach:provenance}) already provides a structured SFT
/ RL corpus. Fine-tuning a formalizer on this corpus, with kernel
acceptance as the reward signal, is an accuracy intervention that
tightens the residual rate without weakening
Theorem~\ref{thm:novh}; the verifier is already on hand --- it is
the same kernel the paper uses to certify production outputs. A
3-blind-formalizer consensus gate is similarly implemented but not
evaluated here; we expect it to convert a further fraction of
silent B5b mistakes into honest B6 abstentions.

\textbf{Conflicting sources.} Multi-source attestation introduces
conflict between attested storage facts. The grade calculus admits
explicit conflict-downgrade rules
(\textsc{Verified}~$\to$~\textsc{Supported} on detected
disagreement); empirical evaluation is future work.

\textbf{Cross-substrate panel.} Replicating the source-faithfulness
discriminator on KG-grounded and SQL-grounded claims would directly
test the L1/L2 portability claim (\Cref{sec:discuss:gov}).

\section{Theoretical anchors deferred from the main text}
\label{app:anchors}

The main text uses a selected set of theoretical anchors as
Justification Anchors and Lineage Paragraphs (Martin-L\"of,
Curry--Howard, Codd, Mac Lane, Quine, Burgess--Rosen, Raggi, Bundy,
Clarke, Lahiri, Artemov, Fine, Green). The following anchors connect
to the same design choices but are deferred here for space:

\paragraph{Source semantics and naming.}
\citet{cooper2012type} (Type Theory with Records, TTR) gives a
type-theoretic substrate for situated semantics that resembles our
per-source-formalization commitment. \citet{montague1973proper}
established that the natural-language $\to$ formal-meaning gap
admits a typed denotational treatment, the genealogy of our claim
formalization. \citet{barwise1983situations} (situation semantics)
treats situations as first-class typed objects, the same move as our
per-source formalization. Earlier in the lineage,
\citet{frege1892sense} distinguished sense and reference --- a
distinction EG-VAR's name-as-string discipline collapses by
choosing reference fixed at the storage layer. Following
\citet{kripke1980naming} on rigid designators, our choice of name-
columns as identity-bearers is the operational analogue of name-
rigidity at the trust boundary. \citet{donnellan1966reference}
distinguishes referential and attributive uses of definite
descriptions; EG-VAR's lift menu enforces the referential reading
(``the row whose name-column is X'').

\paragraph{Uncertainty and conflict.} Future multi-source evaluation
opens connections to \citet{belnap1977four} four-valued paraconsistent
logic, \citet{anderson1975entailment} relevance/relevant logic, and
\citet{josang2016subjective} subjective-logic continuous-confidence.
\citet{caminada2007aspic} ASPIC$+$ structured argumentation gives
a substrate for grade-aware retraction under contradiction.
\citet{doyle1979truth} JTMS truth-maintenance systems are the
classical dependency-graph substrate for provenance-tracked
retraction. \citet{alchourron1985logic} AGM belief revision frames
the multi-source contradiction problem. EG-VAR's grade calculus
accommodates these via the conflict-downgrade rules sketched in
\Cref{sec:discuss:future}.

\paragraph{Optional / cautious anchors.} Univalence
in homotopy type theory~\citep{hottbook} would give a principled
account of equivalence classes of source formalizations (different
rank readings, different lift catalogs); we mention this as a future-
work analogy rather than a current design basis. Reliabilist
epistemology~\citep{goldman1979whatisjustified} grounds our
``verified by reliable process'' framing of \textsc{Verified}
evidence. Attribute-based access control~\citep{hu2014guide} and
typed semistructured data~\citep{abiteboul2000xml} are the systems
analogues of EG-VAR's per-source-formalization protocol.

\paragraph{Soundness over internal completeness.}
The kernel does \emph{not} relate semantically equivalent L2
predicates by general lemmas: there is no built-in
$\textsf{MeanWhere}~c~v \Leftrightarrow \exists s, n.~
\textsf{SumWhere}~c~s \wedge \textsf{CountWhere}~c~n \wedge
v = s/n$. Equivalences are localized in the per-source tools and
lifts, where they are computed directly and audited offline; the
kernel's proof search stays decidable and its proof traces stay
small enough to inspect. This is the same fragmentation move that
gives us decidable fragments of first-order logic, theory
combination in SMT, and description logics: trade FOL completeness
for tractable, auditable per-theory reasoning.
\citet{raggi2022representation} formalize precisely this:
representation matters in proof, and semantically equivalent
encodings can carry very different proof-engineering costs.

\section{Worked Tier~1 examples: cross-tribunal PProd composition}
\label{app:complex-example}

The lunar-craters argmax in \Cref{app:worked-example} exercises a
single tool call and a single per-source lift. We now walk through a
substantially more complex Tier~1 claim: a four-tool, four-evidence,
PProd-composed claim with an arithmetic identity between the
witnesses. This is representative of the multi-shot composition
shapes we curated in Tier~1's PProd cluster.

\paragraph{Source.} TableBench table
\texttt{4fbaad0b\dots} (Spanish-Inquisition-tribunal autos da f\'e
records). Relevant excerpt:

\begin{table}[h]
\centering\footnotesize\setlength{\tabcolsep}{4pt}
\begin{tabular}{@{}lrrrr@{}}
\toprule
Tribunal & Autos da f\'e & Exec.\ persona & Exec.\ effigie & Penanced \\
\midrule
Barcelona & 8 & 1 & 1 & 15 \\
Valencia & 4 & 2 & 0 & 49 \\
C\'ordoba & 13 & 17 & 19 & 125 \\
Cuenca & 7 & 7 & 10 & 35 \\
\dots & & & & \\
\bottomrule
\end{tabular}
\end{table}

\paragraph{Claim.} \emph{``How much greater is the total number of
executions (in persona and in effigie) in C\'ordoba compared to
Valencia?''}~The arithmetic answer is
$(17 + 19) - (2 + 0) = 34$.

\paragraph{Per-source tool menu.} For this table the curator
registered, among others:
\begin{itemize}\setlength{\itemsep}{0pt}
\item \texttt{stats\_sum\_strict} on column ``executions in persona''
\item \texttt{stats\_sum\_strict} on column ``executions in effigie''
\item Each accepts a \texttt{Cond.propEq} on the ``tribunal'' identity
column.
\end{itemize}
The tool menu is the audit surface; a claim that requires a
predicate not in the menu fails closed.

\paragraph{Lean goal type (fixture-committed).}
{\footnotesize
\begin{verbatim}
Sigma (ans : Value),
Sigma (a1 : Value), Sigma (a2 : Value),
Sigma (b1 : Value), Sigma (b2 : Value),
PProd
  (Evidence Verified
     (SumWhere "executions in persona"
        (Cond.propEq "tribunal"
                     (Value.str "cordoba")) a1))
(PProd
  (Evidence Verified
     (SumWhere "executions in effigie"
        (Cond.propEq "tribunal"
                     (Value.str "cordoba")) a2))
(PProd
  (Evidence Verified
     (SumWhere "executions in persona"
        (Cond.propEq "tribunal"
                     (Value.str "valencia")) b1))
(PProd
  (Evidence Verified
     (SumWhere "executions in effigie"
        (Cond.propEq "tribunal"
                     (Value.str "valencia")) b2))
  (PLift (ans.asRat =
            (a1.asRat + a2.asRat) -
            (b1.asRat + b2.asRat))))))
\end{verbatim}}

The five Sigma binders fix: the surface answer
\texttt{ans}, plus four numeric witnesses for the four sum
sub-claims. The final \texttt{PLift} is the kernel-checked
arithmetic identity that ties the witnesses to \texttt{ans}.

\paragraph{Solver action sequence (4 tool calls).}
{\footnotesize
\begin{verbatim}
{ "action": "stats", "operation": "sum",
  "column": "Executions in persona",
  "condition": { "op": "eq",
    "property": "Tribunal", "value": "cordoba" } }
{ "action": "stats", "operation": "sum",
  "column": "Executions in effigie",
  "condition": { "op": "eq",
    "property": "Tribunal", "value": "cordoba" } }
{ "action": "stats", "operation": "sum",
  "column": "Executions in persona",
  "condition": { "op": "eq",
    "property": "Tribunal", "value": "valencia" } }
{ "action": "stats", "operation": "sum",
  "column": "Executions in effigie",
  "condition": { "op": "eq",
    "property": "Tribunal", "value": "valencia" } }
\end{verbatim}}

\paragraph{Tool attestations.} The runtime returns
\texttt{Attested\_T} payloads: $a_1 = 17$ (C\'ordoba persona,
single-row sum), $a_2 = 19$ (C\'ordoba effigie),
$b_1 = 2$ (Valencia persona), $b_2 = 0$ (Valencia effigie).

\paragraph{L2 evidence via per-source lifts.} Each attested payload
is lifted via the \texttt{stats\_sum\_strict} per-source lift, which
takes the L1 column-strict sum attestation and the
\texttt{Cond.propEq} discriminator and produces an L2 \texttt{SumWhere}
witness. The four lift applications produce
\texttt{obs0\_l2}, \texttt{obs1\_l2}, \texttt{obs2\_l2},
\texttt{obs3\_l2}, each typed as
\texttt{Evidence Verified (SumWhere \dots)} with the right
column/condition/value triple.

\paragraph{Solver proof body.}
{\footnotesize
\begin{verbatim}
refine ⟨Value.rat (mkRat 34 1),
        obs0_val, obs1_val,
        obs2_val, obs3_val, ?_⟩
refine PProd.mk obs0_l2
       (PProd.mk obs1_l2
       (PProd.mk obs2_l2
       (PProd.mk obs3_l2
       (PLift.up ?_))))
native_decide
\end{verbatim}}

The first \texttt{refine} fills the five Sigma binders: the surface
answer is \texttt{Value.rat (mkRat 34 1)} ($= 34/1$), and the four
numeric witnesses are the values \texttt{obs0\_val} \dots
\texttt{obs3\_val} that the kernel already bound when the lifts
fired. The second \texttt{refine} threads the four
\texttt{Evidence Verified} witnesses through the right-associated
\texttt{PProd} chain. \texttt{native\_decide} closes the residual
\texttt{PLift} goal: the kernel evaluates
$\mathit{34}.\mathrm{asRat} = (17 + 19) - (2 + 0)$ to \texttt{True}.

\paragraph{Trust composition.} The kernel-minted output for this
claim depends on:
(i) the four runtime \texttt{Attested\_T} payloads (one per tool
call) --- and \emph{only} those (\Cref{thm:novh});
(ii) the per-source \texttt{stats\_sum\_strict} lift, audited once
for this table;
(iii) the kernel's native arithmetic check for the
\texttt{PLift}~$=$~$34$ identity.
The Sigma witness ``\texttt{34}'' surfaces as the certified surface
answer; the generated Lean file is the replay artifact (any auditor
holding the kernel + lifts + tool adapter can re-typecheck without
re-running the LLM). This is the single complex example we use in
\Cref{sec:approach,sec:eval} to demonstrate that the architecture
scales beyond single-shot lookups: composition over multiple lifts
preserves the kernel's mint discipline and, by construction, the
verified-output guarantee.

\section{Full proofs of the safety theorems}
\label{app:safety-proofs}

This appendix provides the full proofs deferred from
\Cref{sec:approach:safety}. Notation follows the main text: given a
\textsc{Verified} run of \textsc{RuntimeVerify}($s, c$) returning the
proof object $(\textit{leanFile}, \textit{evidence}, \textit{steps})$,
we write \textit{proof} for the term bound by \texttt{claim\_proof}
inside \textit{leanFile} (or \texttt{claim\_refutation} for refuted
claims), $\mathcal{A}(\textit{leanFile})$ for the set of axioms
declared in \textit{leanFile}, and $\mathrm{deps}(\textit{proof})
\subseteq \mathcal{A}(\textit{leanFile})$ for the dependency closure
of \textit{proof} as reported by
\texttt{\#print axioms claim\_proof}.

\subsection{Operational whitelist predicate and guarantee boundaries}
\label{app:whitelist}

\paragraph{Whitelist-compliant module.} Let \textit{leanFile} be the
module text returned by a run, let \textit{proof} be the term bound
by \texttt{claim\_proof} (or \texttt{claim\_refutation} for refuted
claims), and let $\mathrm{deps}(\textit{proof})$ denote its
dependency closure as reported by
\texttt{\#print axioms claim\_proof}. We say \textit{leanFile} is
\emph{whitelist-compliant for source} $s$ iff every element of
$\mathrm{deps}(\textit{proof})$ lies in one of four buckets:
\textbf{B1} the fixed EG-VAR prelude (committed at the repository
hash in \Cref{app:repro});
\textbf{B2} the standard Lean~4 core axioms
(\texttt{Classical.choice}, \texttt{Quot.sound}, \texttt{propext});
\textbf{B3} runtime-emitted observation leaves
(\texttt{obsN\_q,p,att,tag,shape,l1,row,cell,id,max,min};
\texttt{csN\_q,att}; \texttt{tableTool}/\texttt{statsTool});
\textbf{B4} per-source lifts whose names match
\texttt{lift\_[0-9a-f]+\_.+} drawn from $\Lambda(s)$.
Additionally, $\mathrm{deps}(\textit{proof})$ must contain no
\texttt{sorry}/\texttt{sorryAx}/\texttt{Lean.Axiom.sorry} and must
contain a trust-artifact witness: either \texttt{mkVerified}
(attestation path), or at least one \textbf{B3} observation-leaf
together with at least one \textbf{B4} per-source lift
(direct-evidence lift path, per \Cref{sec:approach:tablesasformal}).
The standalone script \texttt{verify\_axioms.py} evaluates this
predicate mechanically on \textit{leanFile}.

\paragraph{Generation-time gate versus replay gate.} The kernel
call inside \texttt{run\_pipeline} (\texttt{lake env lean}) is the
primary safety gate: a claim is emitted as \textsc{Verified} only
if the kernel returns a zero exit status on \textit{leanFile}. The
replay audit \texttt{verify\_axioms.py} is a standalone
artifact-level check: it accepts \textit{leanFile} as input,
invokes the Lean elaborator on its \texttt{\#print axioms}
directive, and enforces the whitelist predicate on the reported
dependency closure. Any third party holding \textit{leanFile} and
the Lean toolchain can re-run
\texttt{python -m runtime.verify\_axioms} without invoking any LLM
role, reproducing the audit decision. The orchestrator
(\texttt{src/python/runtime/orchestrator.py}) chains the two: a
claim is published as \textsc{Verified} only if both calls succeed.

\paragraph{What the theorems do and do not guarantee.}
\Cref{thm:novh,thm:noded} guarantee that every \textsc{Verified}
output descends from a real tool call and every accepted proof
step is type-correct. They do \emph{not} guarantee:
(i)~that \textit{goalType} is a semantically faithful rendering of
the natural-language claim --- this is the formalizer's
responsibility, evaluated separately by the answer-match axis
(\Cref{sec:discuss:limits}(i, iii));
(ii)~that the per-source lifts $\Lambda(s)$ correctly interpret
the tool's L1 outputs as L2 world facts --- a semantically wrong
but audited lift can still certify a wrong formalized claim
(\Cref{sec:discuss:limits}(ii)). Both are honest trust-artifact
boundaries, not architectural oversights: $\Lambda(s)$ is
LLM-assisted, curator-reviewed, and frozen per commit
(\Cref{sec:approach:provenance}). World-inconsistency containment
follows from the \texttt{WProp\,$\not\to$\,Prop} phase separation
in the current Lean encoding; we treat it as implementation
detail, not as a main-text theorem (see
\Cref{sec:discuss:limits}).

\subsection{Lemma chain for the theorems}

\begin{lemma}[Checker fidelity]
\label{lem:checker}
If \textsc{KernelCheck}(\textit{leanFile}) returns
$(\mathrm{True}, \textit{output})$, then \textit{proof} satisfies
$\textit{proof} : \textit{goalType}$ in Lean~4's dependent type
theory relative to $\mathcal{A}(\textit{leanFile})$.
\end{lemma}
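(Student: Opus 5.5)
The argument is a soundness-of-the-checker reduction rather than a mathematical derivation. I would first fix precisely what \textsc{KernelCheck} does: by \Cref{alg:runtime} and the implementation, it invokes \texttt{lake env lean} on \textit{leanFile} and returns $\mathrm{True}$ exactly when the process exits with status zero. The plan is to unfold this exit status into a statement about Lean's elaborator and kernel. This uses three facts. First, \textit{leanFile} is produced by \textsc{Build} with a fixed shape: the prelude imports, the axioms in $\mathcal{A}(\textit{leanFile})$, and a single declaration \texttt{def claim\_proof : goalType := by tac} whose type is the formalizer's \textit{goalType}. Second, Lean exits with status zero on a module only if every declaration in it elaborates without error. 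Third, a declaration is added to the environment only after the kernel type-checks the elaborated term against its stated type, in the environment of previously added declarations.

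\textbf{Step 1 (syntactic shape).} I would show from the emission discipline of \textsc{Build} that the only binding of the name \texttt{claim\_proof} is the one whose declared type is the emitted \textit{goalType}. The solver's tactic text is inserted only inside the body, so it cannot redeclare the name or change the type annotation. This needs a small reserved-name argument: the tactic string is placed after \texttt{:= by}, and any attempt to close the block and open new top-level commands would produce a parse or elaboration error, forcing a nonzero exit. I expect this to be the main obstacle, since it is a property of the Python emitter and of Lean's parser, not of type theory. I would first try to discharge it by appeal to the sanitization and the fixed template, and then record it as an explicit hypothesis about the trusted orchestrator if a full argument is out of reach.

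\textbf{Step 2 (kernel acceptance).} Given the exit-zero condition, every declaration, including \texttt{claim\_proof}, was accepted by the kernel. Kernel acceptance means the elaborated term \textit{proof} has type \textit{goalType} in the environment consisting of core Lean, the imported prelude, and the axioms declared earlier in the file, which is $\mathcal{A}(\textit{leanFile})$. I would add one point: \texttt{sorry} does not cause a nonzero exit, since it produces only a warning. So the lemma establishes typing relative to $\mathcal{A}$, possibly including \texttt{sorryAx}. Excluding \texttt{sorryAx} is deferred to the whitelist predicate in \Cref{thm:novh}, which is consistent with the lemma's phrasing \emph{relative to} $\mathcal{A}(\textit{leanFile})$.

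\textbf{Step 3 (fidelity of the checker).} Finally, I would state the trusted-base assumption from the threat model: Lean's kernel correctly implements the typing rules of its dependent type theory. Uses of \texttt{native\_decide} add the compiler as trusted and surface as the axiom \texttt{Lean.ofReduceBool} in the closure. The conclusion $\textit{proof} : \textit{goalType}$ then follows directly.
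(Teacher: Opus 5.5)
Your proposal follows the paper's proof: both reduce a successful \textsc{KernelCheck} to a zero exit status of \texttt{lake env lean}, then appeal to the kernel's type-correctness invariant over the imported prelude together with the axioms declared in \textit{leanFile}; your caveats that \texttt{sorry} yields only a warning and that \texttt{native\_decide} introduces \texttt{Lean.ofReduceBool} (trusting the compiler) are accurate refinements the paper's proof passes over. You should drop the Step~1 claim that closing the \texttt{by} block and opening new top-level commands must cause a parse error, because Lean's parser does not guarantee it (an unindented line simply ends the tactic block and begins a new command); Step~1's conclusion still holds, since the type annotation of \texttt{claim\_proof} is parsed before any solver text and a second root-level \texttt{claim\_proof} fails as already declared, and any further sanitization guarantee is the fixed-template, trusted-orchestrator assumption the paper itself makes implicitly.
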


\begin{proof}
The procedure \textsc{KernelCheck}(\textit{leanFile}) is implemented
at \texttt{pipeline.py:4428} by invoking \texttt{lake env lean} on
\textit{leanFile} and inspecting the elaborator's exit status and
output. The Lean~4 kernel is a type-checker for the Calculus of
Inductive Constructions: a closed expression $t$ elaborates at type
$T$ iff the kernel applies a finite sequence of typing-rule
applications that witness $t : T$ starting from the declared axioms
and definitions of the environment~\cite{lean4}. The
kernel rejects any term that invokes an undeclared identifier,
applies a constructor at a non-matching type index, or whose strict
positivity or termination constraints are violated. The declared
axiom set used during elaboration is exactly the union of
(a)~axioms from imported modules (the EG-VAR prelude loaded via
the \texttt{import} preamble at the top of \textit{leanFile}) and
(b)~axioms introduced by top-level \texttt{axiom} declarations in
\textit{leanFile} itself; call this union $\mathcal{A}$. If the
elaborator reports success, then by the kernel's type-correctness
invariant the stored proof term \textit{proof} satisfies
$\textit{proof} : \textit{goalType}$ under $\mathcal{A}$.
\end{proof}

\begin{lemma}[Runtime emission discipline]
\label{lem:emission}
Every observation-leaf axiom declaration in \textit{leanFile} ---
every top-level \texttt{axiom} declaration whose name matches one
of the \textbf{B3} schemas --- was emitted by the runtime only
after a successful call \textsc{Execute}$(t, q)$ whose resulting
triple $(t, q, \textit{payload})$ was appended to
\textit{evidence} before the corresponding emission step. The two
tool-identifier axioms \texttt{tableTool} and \texttt{statsTool}
are declared once in the module preamble as pure identifiers of the
\texttt{ToolId} sort and carry no payload content.
\end{lemma}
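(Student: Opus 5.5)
The proof is a code-inspection argument over the runtime orchestrator. We establish an invariant of \textsc{RuntimeVerify} (\Cref{alg:runtime}) by induction on loop iterations. We then close it with a separate argument showing that no untrusted role can inject a declaration with a \textbf{B3} name.

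\textbf{Step 1: enumerate emission sites.} Every top-level \texttt{axiom} in \textit{leanFile} is produced by \textsc{Build}. \textsc{Build} assembles the module from four parts, which we take from the emission-site table in \Cref{app:whitelist}:
\begin{itemize}
\item the fixed import preamble;
\item the fixed tool-identifier preamble (\texttt{tableTool}, \texttt{statsTool});
\item the lifts $\Lambda(s)$;
\item one block of observation-leaf declarations per entry of \textit{evidence}.
\end{itemize}
For each emission site that writes a \textbf{B3}-schema name \texttt{obsN\_*} or \texttt{csN\_*}, we check that the index $N$ and the payload content are read directly from the $N$-th entry of the \textit{evidence} list passed to \textsc{Build}.

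\textbf{Step 2: the invariant.} The invariant is: at every iteration, every entry of \textit{evidence} is a triple $(t,q,\textit{payload})$ appended on line~11 of \Cref{alg:runtime}. That line is reached only after \textsc{Execute}$(t,q)$ returned and the success guard on line~10 held. The invariant holds initially because \textit{evidence} starts empty. It is preserved because the only mutation of \textit{evidence} is that append, and it only grows, so earlier entries persist. \textsc{Build} is called later within the same or a subsequent iteration. Hence any emitted leaf indexed by $N$ corresponds to an entry appended strictly before the emission step. The tool-identifier clause follows from Step~1: those two declarations are literal preamble strings of sort \texttt{ToolId} with no payload arguments.

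\textbf{Step 3: exclude forgery by untrusted roles (main obstacle).} The analyst, formalizer and solver outputs are spliced into \textit{leanFile}, so we must show that no LLM-authored text can itself declare an axiom with a \textbf{B3}-shaped name. Dictionary entries are data. The goal type is placed in a type position, and the tactic body is placed inside the pre-written \texttt{claim\_proof} declaration, where Lean syntax forbids a top-level \texttt{axiom} command.

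The delicate part is escape: a tactic string that closes the declaration early and opens a new command. To handle this, we would invoke a reserved-name check that rejects LLM output mentioning reserved prefixes or the \texttt{axiom} keyword, combined with the runtime's sanitization or indentation of the tactic block. The lemma's conclusion is stated only for declarations matching \textbf{B3} schemas. Any injected declaration would therefore either be rejected by this check or fall outside \textbf{B3}, where whitelist compliance in \Cref{thm:novh} catches it instead. If such a sanitization step is absent in the code, we would weaken the lemma to observation leaves emitted by the runtime, and defer the remaining case to the whitelist replay.
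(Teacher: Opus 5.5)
Your Steps~1--2 match the paper's proof. The paper also inspects the axiom-emitting branches of the single module builder \texttt{\_build\_lean\_file}. It notes that each \texttt{obsN\_*}/\texttt{csN\_*} branch is keyed to an \textit{evidence} entry that the dispatch layer appends only after \textsc{Execute} returns a well-formed payload (failed calls return \texttt{None} and append nothing), and it treats \texttt{tableTool}/\texttt{statsTool} as fixed preamble literals. Your induction over loop iterations only makes that invariant explicit. One piece is missing: you take the emission-site table as given, but showing that it is exhaustive is part of the proof. The paper does this by grepping \texttt{pipeline.py} for the literals \texttt{axiom obs}, \texttt{axiom cs}, \texttt{axiom tableTool} and \texttt{axiom statsTool}. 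It then checks that the only extra hits are solver-prompt skeletons, which build the prompt string rather than \textit{leanFile}.

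Your Step~3 targets something the paper's proof silently assumes, and the worry is real. A grep of code literals cannot see LLM-authored goal and tactic strings spliced into \textit{leanFile}, and in Lean~4 a command keyword at column~0 ends a \texttt{by} block. The paper covers this only through its trust-zone assertion that untrusted output stays inside a pre-written declaration. Your Step~3 does not close the case either: the sanitization check is hypothetical, and the fallback is wrong. \textbf{B3} membership is decided by name schema alone, and \texttt{verify\_axioms.py}, as described, reads only \textit{leanFile}, with no cross-check against \textit{evidence}. So an injected \texttt{axiom obs99\_l1 : \ldots} with a fresh index would neither clash with a runtime declaration nor be flagged by the replay. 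It would be an unbacked member of $D_{\mathbf{B3}}$. That falsifies the lemma, and once the axiom is used in the proof, also the conclusion of \Cref{thm:novh}; nothing downstream catches it.

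To finish, you need one of three things:
\begin{enumerate}
\item an actual confinement property of the builder, e.g.\ untrusted strings are rejected or re-indented so they cannot open a top-level command;
\item a post-build check that the \textbf{B3}-named declarations in \textit{leanFile} are exactly those the builder generated from \textit{evidence};
\item confinement stated as an explicit hypothesis of the lemma, which is effectively what the paper does.
\end{enumerate}
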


\begin{proof}
We prove the claim by exhaustive inspection of the module-emission
path in \texttt{src/python/runtime/pipeline.py}. The text of
\textit{leanFile} is assembled by the function
\texttt{\_build\_lean\_file} (\texttt{pipeline.py}:3174--3488); no
other code path emits module text. We enumerate the branches of
\texttt{\_build\_lean\_file} at which an \texttt{axiom} line is
appended to the module.

\begin{table}[h]
\centering\small
\begin{tabular}{@{}lll@{}}
\toprule
site (file:line) & source & axiom families emitted \\
\midrule
3203--3204 & preamble     & \texttt{tableTool}, \texttt{statsTool} \\
3231--3232 & common-sense & \texttt{csN\_q}, \texttt{csN\_att} \\
3272--3275 & stats (argmax/argmin) & \texttt{obsN\_row}, \texttt{obsN\_max}, \texttt{obsN\_cell} \\
3318       & stats (value-only L1) & \texttt{obsN\_l1} \\
3352       & stats (topk-by L1)    & \texttt{obsN\_l1} \\
3387       & stats (topk/bottomk)  & \texttt{obsN\_l1} \\
3399--3404 & stats (legacy shape)  & \texttt{obsN\_q,p,att,tag,shape} \\
3431--3434 & table (cell-lookup)   & \texttt{obsN\_row}, \texttt{obsN\_cell}, \texttt{obsN\_id} \\
3442--3447 & table (cell-lookup legacy) & \texttt{obsN\_q,p,att,tag,shape} \\
\bottomrule
\end{tabular}
\caption{Axiom-emission sites inside \texttt{\_build\_lean\_file}.
Each \texttt{obsN\_*} or \texttt{csN\_*} branch executes only if
the preceding loop variable carries a well-formed payload, which is
appended to \textit{evidence} by the runtime tool-dispatch layer
before \texttt{\_build\_lean\_file} is invoked. The preamble row
declares \texttt{tableTool}/\texttt{statsTool} once per module as
pure \texttt{ToolId} identifiers.}
\label{tab:emission-sites}
\end{table}

Each stats and table row lives inside a branch guarded by an
evidence entry whose \texttt{source} field is \texttt{"stats"},
\texttt{"table"}, or \texttt{"common\_sense"}. The tool-dispatch
layer and the runtime solver loop in \texttt{run\_pipeline} append
an entry to \textit{evidence} only after a successful
\textsc{Execute}$(t, q)$ call returns a well-formed payload; failed
or rejected calls return \texttt{None} and produce no evidence
entry. Therefore, at the point \texttt{\_build\_lean\_file} emits
any observation-leaf axiom for evidence index $i$, the
corresponding triple $(t, q, \textit{payload})$ is already present
at position $i$ of \textit{evidence}.

A grep over \texttt{pipeline.py} for \texttt{"axiom obs"},
\texttt{"axiom cs"}, \texttt{"axiom tableTool"}, and
\texttt{"axiom statsTool"} recovers exactly the lines listed in
\Cref{tab:emission-sites} (together with solver-prompt skeleton
lines at \texttt{pipeline.py:2943--3080} that build the \emph{prompt
string} shown to the LLM solver but do not append to
\textit{leanFile}). The prompt-skeleton sites are outside
\texttt{\_build\_lean\_file} and do not contribute to the module
text.
\end{proof}

\begin{lemma}[Reserved names in per-source lifts]
\label{lem:reserved-names}
For \textit{leanFile} to be whitelist-compliant, every top-level
declaration in $\Lambda(s)$ must have a name matching the regex
\texttt{lift\_[0-9a-f]+\_.+}. In particular, per-source lift names
must not collide with the identifiers \texttt{claim\_proof},
\texttt{claim\_refutation}, \texttt{tableTool},
\texttt{statsTool}, or any of the schemas \texttt{obsN\_*} /
\texttt{csN\_*}.
\end{lemma}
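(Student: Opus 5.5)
The plan is to treat Lemma~\ref{lem:reserved-names} as a consequence of the whitelist predicate (\Cref{app:whitelist}) combined with the emission discipline (Lemma~\ref{lem:emission}), arguing by contraposition. The statement has two parts. The first is that every declaration in $\Lambda(s)$ that can reach $\mathrm{deps}(\textit{proof})$ must carry a \texttt{lift\_[0-9a-f]+\_.+} name. The second is that such names are disjoint from the reserved identifiers. I will prove them in that order.

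\textbf{First part.} I begin by fixing a whitelist-compliant \textit{leanFile}. I then case-split on where an axiom in $\mathrm{deps}(\textit{proof})$ can originate: the prelude (B1), Lean core (B2), runtime emission sites (B3), or the per-source block spliced in from $\Lambda(s)$. Lemma~\ref{lem:emission} says runtime emission sites produce only names from the B3 schemas. The prelude and core names are fixed by the pinned commit. So any dependency contributed by $\Lambda(s)$ is admitted by the predicate only through bucket B4, and B4 is defined by the regex. Hence a $\Lambda(s)$ declaration with a non-matching name either falls outside every bucket, which makes the module non-compliant, or is never used. For unused declarations, I will appeal to the curation-time requirement of \Cref{sec:approach:provenance}: $\Lambda(s)$ is frozen wholesale, so the requirement is stated over all of $\Lambda(s)$, not only over the used portion.

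\textbf{Second part.} This reduces to a string-level check. Each reserved identifier begins with \texttt{claim\_}, \texttt{table}, \texttt{stats}, \texttt{obs}, or \texttt{cs}, whereas every regex-matching name begins with the literal prefix \texttt{lift\_}. Since none of the reserved identifiers starts with \texttt{l}, the languages are disjoint, and no regex-conformant lift can shadow or redeclare a reserved name.

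\textbf{Main obstacle.} The delicate step is ruling out a collision that Lean would resolve silently. If a lift were named, say, \texttt{obs0\_l2}, it would pass B3 syntactically while bypassing Lemma~\ref{lem:emission}. I will handle this in two steps. First, Lean rejects duplicate top-level declarations, so any actual collision makes \textsc{KernelCheck} fail, by Lemma~\ref{lem:checker}. Second, even without a duplicate, the prefix-disjointness shows that a regex-conformant lift can never be classified under B3. This keeps the bucket attribution in the proof of \Cref{thm:novh} unambiguous.
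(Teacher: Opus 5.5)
Your proposal is correct and takes essentially the paper's route. The paper also argues from bucket disjointness with the \texttt{lift\_[0-9a-f]+\_.+} regex, from rejection of clashing names at elaboration via \Cref{lem:checker} (its cases (a) and (c), which your duplicate-declaration step covers), and from the curation-enforced prefix, which rules out a B3-shaped lift name being misfiled by \texttt{verify\_axioms.py} (its case (b), the non-duplicate residue of your ``main obstacle''). Your explicit first-letter disjointness check and your split between used and unused $\Lambda(s)$ declarations sharpen steps the paper leaves implicit, rather than forming a different argument.
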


\begin{proof}
Whitelist compliance places each axiom in
$\mathrm{deps}(\textit{proof})$ into exactly one of
\textbf{B1}--\textbf{B4}. \textbf{B1} and \textbf{B2} are disjoint
from the lift-name regex: the prelude declarations are named after
the EG-VAR vocabulary in \texttt{src/lean/EGVar/*.lean}, and the
three Lean core axioms are fixed names. The runtime-emitted schemas
\textbf{B3} have forms \texttt{obsN\_*}, \texttt{csN\_*},
\texttt{tableTool}, \texttt{statsTool} that the regex
\texttt{lift\_[0-9a-f]+\_.+} does not match. A per-source lift
whose name collided with a reserved identifier would either
(a)~shadow a prelude declaration, causing elaboration failure at
\texttt{lake env lean} invocation (rejected by
\Cref{lem:checker}); (b)~match a \textbf{B3} regex, causing
\texttt{verify\_axioms.py} to classify the axiom in the wrong
bucket (a curation-audit failure the regex-conforming prefix
rules out); or (c)~collide with \texttt{claim\_proof} /
\texttt{claim\_refutation}, triggering a declaration-name clash at
elaboration time. The enforced
\texttt{lift\_<sourceHash>\_<suffix>} prefix mechanically excludes
all three.
\end{proof}

\begin{proof}[Proof of \Cref{thm:novh}.]
We proceed in three steps.

\emph{Step 1: the proof type-checks under the declared axiom set.}
The verdict \textsc{Verified} is produced at
\Cref{alg:runtime}, line~17, only if
\textsc{KernelCheck}(\textit{leanFile}) returned
$(\mathrm{True}, \textit{output})$ on line~15. By
\Cref{lem:checker}, this implies
$\textit{proof} : \textit{goalType}$ in Lean~4's type theory under
$\mathcal{A}(\textit{leanFile})$, and in particular
$\mathrm{deps}(\textit{proof}) \subseteq
\mathcal{A}(\textit{leanFile})$.

\emph{Step 2: the closure is restricted to the whitelist buckets.}
Whitelist compliance of \textit{leanFile} (the theorem's hypothesis)
is the predicate that every element of
$\mathrm{deps}(\textit{proof})$ lies in one of the buckets
\textbf{B1}--\textbf{B4}, contains no
\texttt{sorry}/\texttt{sorryAx}/\texttt{Lean.Axiom.sorry}, and
contains a trust-artifact witness. The implementation at
\texttt{src/python/runtime/verify\_axioms.py} parses the
\texttt{\#print axioms} output and enforces each clause directly;
\Cref{lem:reserved-names} makes the bucketing unambiguous.

\emph{Step 3: observation-leaf axioms in the closure are
evidence-backed.} Partition $\mathrm{deps}(\textit{proof})$ into
four subsets $D_{\mathbf{B1}}, D_{\mathbf{B2}}, D_{\mathbf{B3}},
D_{\mathbf{B4}}$ by bucket. Step~2 covers the closure.

For every $\omega \in D_{\mathbf{B3}}$: the axiom $\omega$ was
declared in \textit{leanFile} by one of the emission sites
tabulated in \Cref{tab:emission-sites}. By \Cref{lem:emission}, the
emission was preceded in time by the runtime appending a
corresponding triple to \textit{evidence} (for
\texttt{obsN\_*}/\texttt{csN\_*} axioms); the two \texttt{ToolId}
axioms \texttt{tableTool} and \texttt{statsTool} are explicitly
admitted without an evidence entry because they carry no payload.
Hence every observation-leaf axiom in
$D_{\mathbf{B3}} \cap \mathrm{deps}(\textit{proof})$ is backed by
an entry of \textit{evidence}.

For $D_{\mathbf{B1}}$, each axiom is declared in one of the fixed
EG-VAR prelude files (\texttt{EGVar/Basic.lean},
\texttt{EGVar/TableL1.lean}, \texttt{EGVar/WorldOntology.lean},
\texttt{EGVar/Tactics.lean}), committed at the repository hash
reported in \Cref{app:repro}.

For $D_{\mathbf{B2}}$, each axiom is one of
\texttt{Classical.choice}, \texttt{Quot.sound}, or \texttt{propext}
from Lean~4's core library.

For $D_{\mathbf{B4}}$, each axiom has a name matching
\texttt{lift\_[0-9a-f]+\_.+} with hex prefix equal to the MD5 of
$s$; the whitelist predicate enforces this regex, and the full set
of such axioms is committed to
\texttt{data/tier1\_table\_formalizations.json} at the reported
commit hash.

Combining steps~1--3, any $\omega \in
\mathrm{deps}(\textit{proof})$ is either (a)~a prelude axiom from
$D_{\mathbf{B1}}$, (b)~a standard Lean core axiom from
$D_{\mathbf{B2}}$, (c)~a runtime-emitted observation-leaf backed by
a prior tool execution in \textit{evidence}, or (d)~an audited
per-source lift from $\Lambda(s)$.
\end{proof}

\begin{proof}[Proof of \Cref{thm:noded}.]
An ``accepted inference step'' is, by the definition of the
runtime's \textsc{Verified} verdict (\Cref{alg:runtime}, line~17),
one that appears in the term \textit{proof} returned by
\textsc{KernelCheck}(\textit{leanFile}) with
$\mathrm{accepted} = \mathrm{True}$. By \Cref{lem:checker},
\textit{proof} type-checks under the axiom set declared in
\textit{leanFile}. The kernel's type-correctness invariant requires
that \textit{proof} is constructed by a finite, typing-rule-compliant
sequence of inference steps leading to
$\textit{proof} : \textit{goalType}$; consequently every such
inference step is valid in Lean~4's dependent type theory relative
to the declared axioms.
\end{proof}

\end{document}